
%
%

\documentclass[runningheads]{llncs}

\usepackage{graphicx}

\usepackage{amsmath}
\usepackage{comment}
\usepackage{amsmath,amssymb}
\usepackage{color}
\usepackage{url}
\usepackage{hyperref}

\usepackage[dvipsnames]{xcolor}


\usepackage[utf8]{inputenc} 
\usepackage[T1]{fontenc}    
\usepackage{hyperref}       
\usepackage{url}            
\usepackage{booktabs}       
\usepackage{amsfonts}       
\usepackage{nicefrac}       
\usepackage{microtype}      
\usepackage{xcolor}         

\usepackage{caption}
\usepackage{pgfplots}
\usepackage{pgfplotstable}
\usepackage{subcaption}
\usepackage{tikz}
\usepackage{mathtools}
\usepackage{placeins}
\usepackage{capt-of}
\usepackage{xcolor,colortbl}
\usepackage{float}
\usepackage{stfloats}
\usepackage{cleveref}

\usepackage{xr}

\newcommand{\eg}{\emph{e.g}.\ } 
\newcommand{\ie}{\emph{i.e}.\ }

\newcommand{\etal}{\emph{et al}.}

\usepgfplotslibrary{fillbetween}

\pgfplotsset{compat=1.17}


%
%
%

\newif\ifreview
\reviewfalse

\ifreview
	\usepackage{lineno}

	\linenumbers
\fi



\begin{document}


\def\SubNumber{45}

\def\GCPRTrack{Main Track}

\title{Self-Masking Networks for Unsupervised Adaptation}

\ifreview
	\titlerunning{GCPR 2024 Submission \SubNumber{}. CONFIDENTIAL REVIEW COPY.}
	\authorrunning{GCPR 2024 Submission \SubNumber{}. CONFIDENTIAL REVIEW COPY.}
	\author{GCPR 2024 - \GCPRTrack{}}
	\institute{Paper ID \SubNumber}
\else

	\author{Alfonso Taboada Warmerdam\inst{1}\orcidID{0000-0002-1767-367X} \and
	Mathilde Caron\inst{2}\orcidID{0000-0001-6594-6698} \and
	Yuki M. Asano\inst{1}\orcidID{0000-0002-8533-4020}}
	
	\authorrunning{Warmerdam et al.}
	
	\institute{University of Amsterdam, Amsterdam \and Google Research, Grenoble 
 }
\fi

\maketitle              

\begin{abstract}
        With the advent of billion-parameter foundation models, efficient fine-tuning has become increasingly important for the adaptation of models to downstream tasks. However, especially in computer vision, it can be hard to achieve good performance when access to quality labeled data is lacking. In this work, we propose a method adapting pretrained generalist models in a self-supervised manner by learning binary masks. 
        These \textbf{s}elf-supervised \textbf{m}asking \textbf{n}etworks (SMNs) are up to 79x more efficient to store and significantly improve performance on label-efficient downstream tasks. 
        We validate the usefulness of learning binary masks as a fine-tuning method on  8 datasets and 3 model architectures, and we demonstrate the effectiveness of SMNs in 3 label-efficient settings.

\keywords{Fine-Tuning \and Self-Supervised Learning \and Masking.}
\end{abstract}

\setlength{\abovedisplayskip}{0pt}
\setlength{\belowdisplayskip}{7pt}
\setlength{\abovedisplayshortskip}{0pt}
\setlength{\belowdisplayshortskip}{7pt}

    \begin{table*}[bp]
    \centering
    \begin{tabular}{c c}
        \includegraphics[width=0.478\linewidth]{./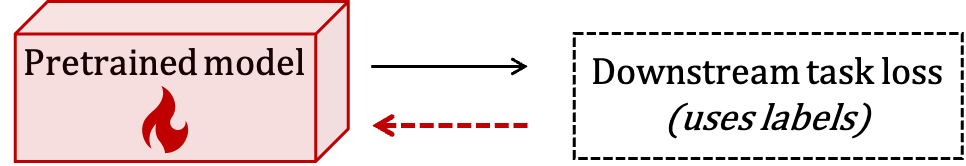} & 
        \includegraphics[width=0.478\linewidth]{./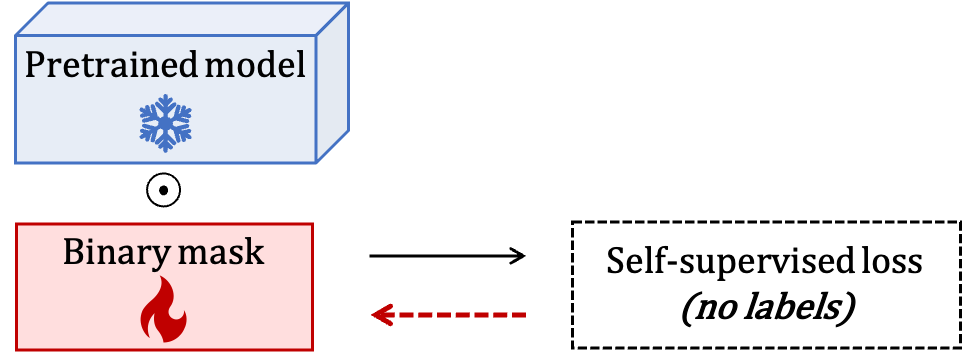} \\
        \textbf{\underline{Standard full-finetuning}} &
    \textbf{\underline{Self-supervised self-masking}} (Ours) \\
    \end{tabular}
    \vspace{+0.2cm}
    \captionof{figure}{\textbf{Conceptual comparison between two adaptation mechanisms: } Standard full-finetuning \textit{versus} self-supervised self-masking (Ours).
        \label{fig:method}}
    \end{table*}

    \section{Introduction}
    

Recent advancements in large-scale pretraining of visual encoders, such as CLIP~\cite{radford2021learning}, DINO~\cite{caronEmergingPropertiesSelfSupervised2021} or MAE \cite{he2022masked}, have shown remarkable and generalizable performances across a variety of computer vision tasks. 
Conventionally, these networks are fine-tuned for specific downstream tasks by adjusting their weights through gradient descent, either by training an additional layer on top of the pretrained network or by fine-tuning the entire network. One limitation of full-fine-tuning is that it necessitates a copy of all the fine-tuned weights to be stored for each downstream task. 
This leads to significant memory requirements, especially as vision models are reaching parameter counts of >22 Billion~\cite{dehghani2023scaling}. 
In contrast, simpler methods like linear probing are often limited in how well they can adapt the base model, as they only have access to the frozen deep representations.

Previous attempts at solving this challenge include adding light-weight learnable adapters~\cite{rebuffi2017learning}, prompt learning~\cite{Loedeman2022prompt,bahngExploringVisualPrompts2022,jia2022visual}, where additional inputs are learned, side-tuning~\cite{zhang2020side} and model soups~\cite{wortsman2022model}, wherein another network is trained and then `fused' with the frozen model, and head-to-toe adaptation~\cite{evci2022head2toe}, which uses intermediate features from all layers of the pretrained network to train a classification head. However, these attempts are usually only evaluated in a supervised setting, while in practice the availability of labels for downstream task adaptation can be lacking.


In this paper, we explore the potential of finding subnetworks within trained networks in a self-supervised manner in order to both {reduce the memory needs of fine-tuned models} and {adapt effectively to downstream tasks in label-sparse situations}. 
The memory requirements compared to standard fine-tuning techniques are reduced because masks, with their binary weights, are more compact than full copies of the network weights. Hence, it is possible to adapt a model to thousands of downstream tasks by storing thousands of cheap binary weights and only one copy of the original model weights. To further explore the potential of this method, we also explore the possibility of chaining multiple adapted models together to yield `model cascades': Here, we train multiple binary masks on different, coherent subsets of the downstream dataset in a self-supervised manner in order to increase downstream accuracy with a negligible increase in model size. This is done in a fully self-supervised manner.

    \section{Related work}


\paragraph{Frozen Network Adaptation.}\label{subsec:network-adapt}



Recent work has shown promising results in adapting pretrained models without modifying the core network weights, which would in theory preserve the original models' performance while enabling it to cater to novel tasks. One such technique is the application of lightweight feature adapters \cite{gao2021clip,zhang2021tip}, which introduce trainable parts into the network while keeping the majority of the model frozen.  Other methods include fine-tuning only the bias parameters of pretrained models \cite{cai2020tinytl,ben-zaken-etal-2022-bitfit}, learning low-rank adaptations~\cite{hulora}, or learning additional inputs to one or multiple layers in a pretrained vision transformer, such as Visual Prompting \cite{bahng2022visual,elsayed2018adversarial,kloberdanz2021improved}, which only learns additional inputs, or \cite{jia2022visual}, which also fine-tunes a linear classifier on top of the model.

\paragraph{Masking Neural Networks.}\label{subsec:pruning-and-neural-network-architectures}

Mallya~\etal~\cite{mallya2018piggyback} were the first to use the pass-through trick, a method that learns subnetwork masks directly through gradient descent. They applied this method to pretrained models to achieve domain adaptation, the new method outperformed their previous work \cite{mallyaPackNetAddingMultiple2018} and reached performance very close to conventional fine-tuning. Ramanujan~\etal~\cite{ramanujanWhatHiddenRandomly2020} found that a similar technique is also able to find well-performing domain-specific subnetworks on random, untrained networks, with implications regarding the Lottery Ticket Hypothesis \cite{frankleLotteryTicketHypothesis2019,malachProvingLotteryTicket}. However, they found that the base network had to be significantly wider in order to perform as well as standard, weight-modifying gradient descent. 
In follow-up work, Wortsman~\etal~\cite{wortsmanSupermasksSuperposition} expand on this approach to teach a random, untrained network thousands of tasks.
A similar technique, in combination with training the actual neural network weights has also been applied as a pruning technique for CNNs and vision transformers~\cite{sanhMovementPruningAdaptive,lagunasBlockPruningFaster2021} as well as for domain generalization \cite{chattopadhyay2020learning}.

Our masking method is functionally equivalent to that of \cite{mallya2018piggyback}, however, we remove redundant hyperparameters (see \cref{sec:hyfr}) and we simplify the scaling necessary to keep the variance constant when neural network weights are masked. We do this by scaling the network weights during forward propagation rather than the gradients (see Eq. \ref{eq:inference}). This is similar to Ramanujan~\etal~\cite{ramanujanWhatHiddenRandomly2020}'s method, but since they fix the percentage of weights that will be masked beforehand, they can instead scale the weights once during initialization.

Other fields where this technique has been applied include Neural Architecture Search~\cite{wortsmanDiscoveringNeuralWirings2019}, where the pass-through trick  is used to determine where to add or remove a connection, and the design of novel neural network architectures such as neural networks with binary weights (-1 or 1) and ternary weights (-1, 0 or 1)~\cite{courbariaux2015binaryconnect,courbariaux2016binarized,liTernaryWeightNetworks2022,zhu2016trained}.
These works have demonstrated the effectiveness of the pass-through trick for learning certain kinds of discrete components through gradient descent. With regard to sparsity in general, \cite{hoefler2021sparsity} have provided a good overview of the different use-cases and methods for sparsity in neural networks.

\paragraph{Self-supervised Learning on Restricted Domains.}\label{subsec:self-supervised-learning}

Recently, self-supervised approaches have gained traction as a way to improve the performance of computer vision models both in general and in label sparse situations~\cite{caronEmergingPropertiesSelfSupervised2021,caronUnsupervisedLearningVisual2021,he2020momentum}.
These approaches enable a model backbone to be trained without the use of labels, where the focus has mostly been on learning good foundation models. However, some work has also shown that self-supervised learning can be used to improve performance on downstream tasks in computer vision, either through self-supervised knowledge distillation \cite{fangSEEDSelfsupervisedDistillation2021,ericsson2021well}, or full-network adaptation \cite{reedSelfSupervisedPretrainingImproves2022}, which has also been done in natural language processing \cite{gururanganDonStopPretraining2020,howard2018universal}. We build on the idea of self-supervised adaptation and apply it to mask-learning with the simple yet effective self-supervised SwAV loss \cite{caronUnsupervisedLearningVisual2021}.

     
    \section{Method}

Our goal is to adapt pretrained networks in a storage-efficient manner without labels. 
In a nutshell, our method learns a binary mask for a pretrained network with a self-supervised loss.
In this section, we will recall the background on learning masks for neural networks followed by our contributions of provably removing unnecessary hyperparameters and show how this method can be used to learn multiple expert models for a given domain to improve adaptation performance.
\subsection{Background: Network Masking}
    \label{section:masking}
    Subnetworks are represented by a binary mask $M$ which indicates the active weights and the ones that are zeroed out. To learn an appropriate mask, each weight is assigned a corresponding score $s$, initialized to a value higher than a threshold $\mu$~\cite{ramanujanWhatHiddenRandomly2020}. 
    At inference, a weight $\theta_i$ is then considered `active' if its score is larger than $\mu$, otherwise, the mask is set to zero and the weight is unused:

    \begin{equation}
        \theta_i = \frac{\theta_i}{\alpha} \cdot M_{\theta_i}  \label{eq:inference}
    \end{equation},
    
    where $M_{\theta_i} = \mathbb{I}[S_{\theta_i} > \mu]$, and $\alpha= \sqrt{\frac1{N}\sum_{i=1}^{N}{\mathbb{I}[S_{\theta_i} > \mu]}}$ is a scaling term to keep the variance of a weight matrix with $N$ parameters constant when weights are deactivated~\cite{ramanujanWhatHiddenRandomly2020}.



    \paragraph{Pass-through trick for training.}
    Since the gradient of the score is lost when a weight is deactivated, the score cannot be learned directly. Instead, the gradient with respect to the mask is used to update the scores:
    %
%
    $S^{t} = S^{t-1} - \lambda \frac{d\mathcal{L}^{t-1}}{dM^{t-1}},$
    where $\mathcal{L}^{t-1}$ denotes the loss at time step ${t-1}$, and $\lambda$ the learning rate.
%
    The key to implementing this in practice is to set the gradient of the score with respect to the loss to be the gradient of the mask with respect to the loss, 
    $\frac{d\mathcal{L}^{t}}{dS^{t}} = \frac{d\mathcal{L}^{t}}{dM^{t}}$.

    Which results in
    \begin{align}
    S^{t} &= S^{t-1} - \lambda \frac{d\mathcal{L}^{t-1}}{dS^{t-1}}, 
    \label{eq:update_scores2}
    \end{align}

    which is the standard Stochastic Gradient Descent (SGD) update equation for parameter $S$. This approach, called the pass-through trick, thus allows standard SGD algorithms to update the score and learn a binary mask.
    
    Intuitively, this method is able to optimize the loss as follows: The update equation makes sure that if increasing the mask would reduce the loss, the score is increased, and vice versa. 
    Through multiple gradient descent iterations, the score then decreases for weights that increase the loss, eventually reaching the threshold and deactivating the weight. Conversely, if a deactivated weight would now reduce the loss when active, the score will increase, and the weight will eventually be reactivated.

\begin{figure*}
    \label{fig:cascade}
    \centering
    \includegraphics[width=0.99\linewidth]{./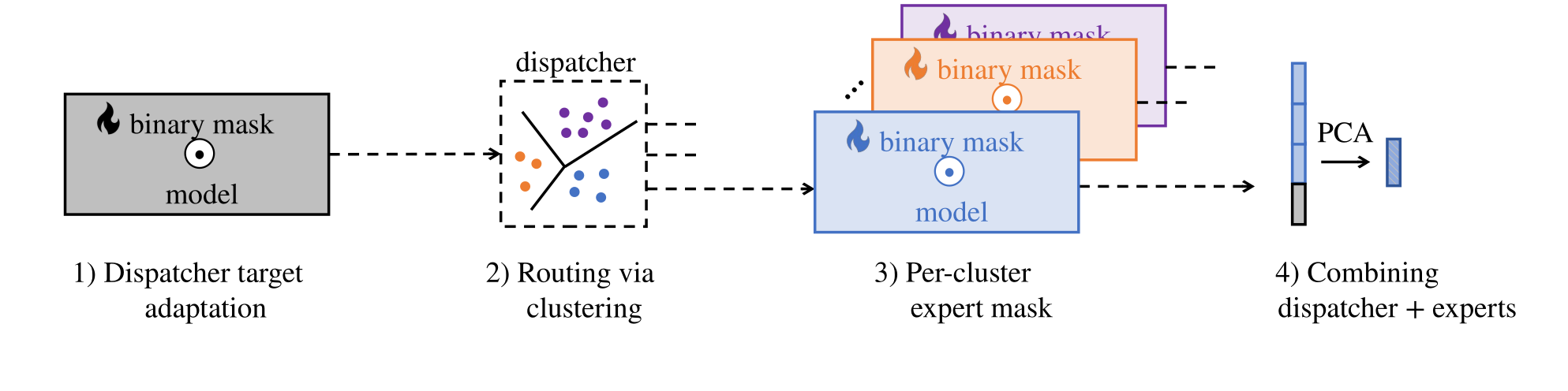}
    \vspace{-0.4cm}
    \caption{\textbf{Cascade models} work by complementing the root model's features with those of expert models which are tailored to specific parts of the training distribution.}
\end{figure*}

\subsection{Hyperparameter-free masking}
    \label{sec:hyfr}

    The existing approach can learn a mask for a network given some loss function, this comes however at the cost of introducing two additional hyperparameters when compared to standard full-fine-tuning: the threshold $\mu$ and the initial score value $S^0$. 
    We show that it is possible to remove \textit{both} hyperparameters:
    
    \begin{theorem}
    Translation invariance of threshold and initialization. Shifting the score initialisation $S^0$ and the threshold $\mu$ by an equal amount does not affect SGD-based training without weight-decay. \label{theo:1}
    \end{theorem}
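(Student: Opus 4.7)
The plan is a short induction on the training step $t$. I would run two parallel SGD trajectories in parallel: one with $(S^0, \mu)$ and one with $(\tilde S^0, \tilde\mu) = (S^0 + c, \mu + c)$ for an arbitrary constant $c \in \mathbb{R}$, and show by induction that (i) the mask sequences satisfy $\tilde M^t = M^t$ for every $t$, and (ii) the scores stay offset by a constant, $\tilde S^t = S^t + c$. Because the forward pass (Eq.~\ref{eq:inference}) sees the scores only via $M$ and via the scaling factor $\alpha$, and $\alpha$ is itself a function of $M$ alone, (i) is enough to conclude that the losses, the gradients with respect to the network weights, and the network-weight trajectories all coincide in the two runs, which is exactly the claim.

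For the base case, I would just check that $\tilde M^0_{\theta_i} = \mathbb{I}[S^0_{\theta_i} + c > \mu + c] = \mathbb{I}[S^0_{\theta_i} > \mu] = M^0_{\theta_i}$, so the initial masks (and hence $\alpha$) agree. For the inductive step, suppose $\tilde S^{t-1} = S^{t-1} + c$ and $\tilde M^{t-1} = M^{t-1}$. Then the two forward passes are identical, so the loss $\mathcal{L}^{t-1}$ is the same. The pivotal observation is that the pass-through substitution $\frac{d\mathcal{L}^{t-1}}{dS^{t-1}} := \frac{d\mathcal{L}^{t-1}}{dM^{t-1}}$ makes the score-gradient a function of the forward graph alone, which only depends on $S$ through $M$; hence the score-gradients in the two runs agree. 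Plugging into the weight-decay-free SGD update (Eq.~\ref{eq:update_scores2}) yields $\tilde S^t = (S^{t-1} + c) - \lambda\, \frac{d\mathcal{L}^{t-1}}{dS^{t-1}} = S^t + c$, and then $\tilde M^t = \mathbb{I}[\tilde S^t > \tilde \mu] = \mathbb{I}[S^t + c > \mu + c] = M^t$ closes the induction.

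The one place where something could go wrong, and thus the main thing to verify carefully, is the claim that the score-gradient is a function of $M$ alone and not of $S$. This is immediate from the definition of the pass-through substitution together with the fact that $S$ never enters the forward computation except through $M$ (and $\alpha(M)$); but it is worth spelling out, because it is precisely this property that distinguishes plain SGD on $S$ from optimizers that couple to the raw parameter value. Concretely, the same argument covers SGD with momentum (momentum depends only on past gradients), but \emph{breaks} as soon as we add an $L_2$ penalty on $S$: a weight-decay step $\tilde S^t \mapsto \tilde S^t - \lambda\gamma\, \tilde S^t$ depends on the absolute value $\tilde S^t = S^t + c$, producing an extra $-\lambda\gamma c$ drift that is not present in the unshifted run, which is exactly why the statement explicitly excludes weight decay.
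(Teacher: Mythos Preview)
Your proposal is correct and follows essentially the same induction as the paper: both argue that the pass-through gradient depends on the scores only through the mask, so an equal shift of $S^0$ and $\mu$ leaves every $M^t$ unchanged. Your formulation is slightly cleaner in that you explicitly carry the invariant $\tilde S^t = S^t + c$ alongside $\tilde M^t = M^t$, whereas the paper unrolls $S^t$ as a cumulative sum of gradients; and your closing remarks on momentum and weight decay match the paper's practical notes.
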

    
    The proof follows from the idea that if the initial mask remains the same after translating both parameters equally, then the gradient will also be the same and so will the mask in the next training step. 
    
    \begin{proof}
        
        As detailed in Section \ref{section:masking}
        , the mask $M_{\theta_i}^t$ for a given weight $\theta_i$ on the training step $t$ is given by
        
        \begin{align*}
           M_{\theta_i}^t &= \mathbb{I} [S_{\theta_i}^t > \mu]
        \end{align*}. For ease of notation, we omit $\theta_i$ from this point on, i.e. $ M_{\theta_i}^t =  M_{}^t$ and $S_{\theta_i}^t = S_{}^t$.
         Now, given the update equation (Eq. 2)
         , $S_{}^t$ can be formulated as
        
        \begin{align*}
            S_{}^t &= S_{}^0 - \sum_{\hat t=0}^{t-1} \lambda^{\hat t} g_{i}(\mathcal{M}^{\hat t}, \mathcal{B}^{\hat t})
        \end{align*}
        
        where $g_{i}(\mathcal{M}^{\hat t}, \mathcal{B}^{\hat t})$ is the gradient of the the weight $\theta_i$'s score given the mask $\mathcal{M}^{\hat t}$ and mini-batch $\mathcal{B}^{\hat t}$.
        Note that the gradient can be fully determined by these two variables (plus all constant parameter such as the model weights).
        Note also that we assume the computation of the gradient to be deterministic, i.e. the same gradient will be computed for the same input, and the input $\mathcal{B}^{\hat t}$ is only dependent on $\hat t$.
        Combining these two equations we get
        
        \begin{align}
            M_{}^0 &= [ S_{}^0 > k] 
            \\&= [ S_{}^0 + a > k + a] \label{eq:basecase} \\
            M_{}^t &= [ S_{}^0 - \sum_{\hat t=0}^{t-1} \lambda^{\hat t} g_{i}(\mathcal{M}^{\hat t}, \mathcal{B}^{\hat t}) > k] 
            \\&= [ S_{}^0 + a - \sum_{\hat t=0}^{t-1} \lambda^{\hat t} g_{i}(\mathcal{M}^{\hat t}, \mathcal{B}^{\hat t}) > k + a]\label{eq:recur}
        \end{align}
        .
        
        Eq. \ref{eq:basecase} indicates that replacing ${ S}_i^0$ with $ S_{}^0 + a$ and $k$ with $k+a$ will not change $M_{}^0$, since this is true for every individual mask, the full network mask $\mathcal{M}^0$ is also invariant to this change.
        This means that the gradient $g_i(\mathcal{M}^0, \mathcal{B}^0)$ does not change, and consequently, $M_{}^1$ is also invariant to this change, as indicated by Eq. \ref{eq:recur}. The same reasoning can be applied recursively to $M_{}^2$ and so on. Thus, by induction, translating the initial score and threshold by equal amounts will not change any of the network masks during training with SGD w/o weight decay.
    \end{proof}

    This eliminates the threshold parameter $\mu$ and leaves us with only the set of initialization scores $S^0$.
    However, even these can be subsumed into an already existing hyperparameter for training, the learning rate:
    \begin{theorem}
    Learning rate and score initialization equivalence. Scaling the score initialization $S^0$ by a factor $\alpha$ is equivalent to scaling the learning rate $\lambda$ by a factor $\frac{1}{\alpha}$. \label{theo:2}
    \end{theorem}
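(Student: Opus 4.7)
The plan is to reduce Theorem \ref{theo:2} to a parallel-trajectory comparison by first invoking Theorem \ref{theo:1} to normalize the threshold, and then coupling two training runs with shared mini-batch sequence and showing that they produce identical mask sequences. Specifically, since Theorem \ref{theo:1} says that shifting $S^0$ and $\mu$ by the same constant leaves training invariant, I will assume without loss of generality that $\mu = 0$. The key consequence is that the mask $M = \mathbb{I}[S > 0]$ is invariant under multiplication of $S$ by any positive scalar, which is exactly the structure Theorem \ref{theo:2} exploits.

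Next, I would set up two runs on the same data sequence $\mathcal{B}^0, \mathcal{B}^1, \dots$ and the same frozen weights $\theta_i$: run $A$ uses initial scores $\alpha S^0$ and learning rate $\lambda$, while run $B$ uses initial scores $S^0$ and learning rate $\lambda/\alpha$. The goal is to prove by induction on $t$ that $S_A^t = \alpha\, S_B^t$, and therefore $\mathcal{M}_A^t = \mathcal{M}_B^t$. The base case holds by construction. For the inductive step, the hypothesis $\mathcal{M}_A^t = \mathcal{M}_B^t$ means both runs feed the same mask, the same batch, and the same frozen weights into a deterministic forward/backward pass, so the pass-through gradient $g(\mathcal{M}^t, \mathcal{B}^t)$ of Eq.~\ref{eq:update_scores2} is identical in both runs. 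The SGD update then gives $S_A^{t+1} = \alpha S_B^t - \lambda g = \alpha \bigl(S_B^t - (\lambda/\alpha) g\bigr) = \alpha\, S_B^{t+1}$, and positivity of $\alpha$ lifts this to $\mathcal{M}_A^{t+1} = \mathcal{M}_B^{t+1}$, closing the induction. Since the entire mask trajectory coincides, the two hyperparameter settings are indistinguishable from the perspective of training dynamics.

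The main subtlety is justifying that the gradient depends only on the current mask and batch, not on the raw scores; this is precisely the content of the pass-through trick from Section \ref{section:masking}, because the forward and backward computations route through $M$, so $S$ enters only via $\mathbb{I}[S>0]$. A secondary caveat inherited from Theorem \ref{theo:1} is that the argument breaks under weight decay, since a decay term proportional to $S$ would scale differently between the two runs and the coupled identity $S_A^t = \alpha S_B^t$ would no longer be preserved by the update; this should be stated explicitly in the theorem. Finally, the argument requires $\alpha > 0$, which I would also make explicit, since a negative scaling would flip signs and break the equivalence of masks.
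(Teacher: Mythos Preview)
Your proposal is correct and follows essentially the same inductive argument as the paper: couple two runs, show $S_A^t = \alpha S_B^t$ by induction using that the pass-through gradient depends only on the current mask and batch, and conclude from $\alpha>0$ and $\mu=0$ that the mask trajectories coincide. The only notable difference is that the paper's proof carries weight decay through the SGD update and shows the equivalence still holds provided the weight-decay coefficient $\gamma$ is simultaneously rescaled by $1/\alpha$, whereas you (correctly, for the theorem as stated) exclude weight decay and flag it as a caveat.
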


    \begin{proof}
        
        
        
        The equation for SGD with weight decay is given below:
        
        \begin{align*}
             S_{}^t &=  S_{}^{t-1} - \lambda^t \left( g_i(\mathcal{M}^{t-1}, \mathcal{B}^{t-1}) + \gamma  S_{}^{t-1} \right)
        \end{align*}
        
        Say we scale the learning rates and scores by $\alpha$, and the weight decay by $1/\alpha$.
        I.e. we replace $\lambda^t$ with $\lambda^t \alpha$, $ S_{}^{t-1}$ with $ S_{}^{t-1} \alpha$ and $\gamma$ with $\frac{\gamma}{\alpha}$ for some $\alpha \in \mathbb{R}^+$, then we arrive at:
        
        \begin{align}
            \alpha S_{}^{t-1} - \alpha &\lambda^t \left( g_i(\mathcal{M}^{t-1}, \mathcal{B}^{t-1}) + \frac{\gamma}{\alpha} \alpha S_{}^{t-1} \right) \\&= \alpha \left( S_{}^{t-1} - \lambda^t \left( g_i(\mathcal{M}^{t-1}, \mathcal{B}^{t-1}) + \gamma  S_{}^{t-1} \right)\right) \nonumber \\
            &= \alpha S_{}^t\label{eq:krrrt}
        \end{align}
        
        In other words, the update equation then provides the same updated score, except it is also scaled by $\alpha$, like the input score.
        
        Similarly to the previous proof, the initial masks $M_{}^0 = [ S_{}^0 > 0]$ are invariant to the scale change $M_{}^0 = [\alpha S_{}^0 > 0]$, so the replacement of $ S_{}^0$ with $\alpha S_{}^0$, combined with the other replacements, does not change the gradient $g_i(\mathcal{M}^0, \mathcal{B}^0)$. Combined with eq \ref{eq:krrrt}, this means that the updated parameter after the first SGD step is only different in scale when compared to what it would have been without the scale change ($=\alpha S_{}^t$). Apply this reasoning recursively and it can be seen through induction that the network masks will be the same during training as for the original learning rate, score initialization and weight decay.
        
    \end{proof}

    \paragraph{Practical implications}
    
    Note that, in the interest of keeping the proofs simple, momentum is omitted from the derivations. However, we ran the experiment in Appendix \ref{apx:expi}, which validates these proofs experimentally, with momentum enabled. This indicates that this invariance also holds if momentum is included. Also note that other SGD settings, which we left disabled in our research, may break the proven invariances. The translation invariance for instance \textit{does not} hold if weight decay is enabled.
    
    These two theorems indicate that, if we keep weight decay disabled, the hyperparameter combination $\lambda=50$, $S^0=1.0$, $\mu=0.0$ is equivalent to the drastically different hyperparameter combination $\lambda=100$, $S^0=2.5$, $\mu=0.5$ for example. In theory this means that masking the same model under exactly the same conditions, except choosing a different equivalent hyperparameter combination, will result in exactly the same output masks. For this reason, we run all our masking experiments with $\mu=0$ and $S_0=1$ with no loss of expressiveness, provided that weight decay is disabled (which we also disable). We show in Appendix 
    \ref{apx:expi} that this theoretical result also applies in practice.

\subsection{Label-free adaptation}

    Since~\cref{eq:update_scores2} can work with any loss function $\mathcal{L}$. We can make this approach work on arbitrary data without any annotations by utilizing a self-supervised loss based on optimal-transport clustering~\cite{asano2020self,caronUnsupervisedLearningVisual2021}:
    \begin{align}
      &\ell(x_t, x_s)) = - \sum_{k} \mathbf{\Phi}(x)_s^{(k)} \log \mathbf{p}_t^{(k)}, \quad
      &\text{with} \quad \mathbf{p}_t^{(k)} = \frac{ \exp \left ( \frac{1}{\tau} \mathbf{\Phi}(x_t)^\top \mathbf{c}_k \right ) }{\sum_{k'} \exp \left ( \frac{1}{\tau} \mathbf{\Phi}(x_t)\top \mathbf{c}_{k'} \right ) }
      \label{eq:loss}
    \end{align}
    where $x_t, x_s$ are two augmented views of an image $x$, $c_k$ are a set of entropy-regularised prototypes learned with Sinkhorn-Knopp~\cite{cuturi2013sinkhorn} and, in this case the visual encoder $\Phi$ is parameterised with a set of frozen weights $\theta$ and the learnable masking scores $S$.

\subsection{Model Cascades}
    Besides training separate masks for adapting a single model efficiently to several domains, we next show how this approach can be used to obtain a set of fine-grained adapted models on a single dataset without supervision. 
    To this end, we first adapt a pretrained model in the manner described above on some target dataset $\mathcal{D}$ to obtain the adapted network $\Phi_0$. 
    Next, the deep embeddings of this network are used to obtain a non-parametric routing function $R$ via clustering into $K$ sets. 
    Finally, starting from the adapted network's initialization, each subset $\mathcal{D}_k$ is used to learn an `expert' network.
    This follows from the findings of self-supervised models performing~\cite{ericsson2021well} and training~\cite{tian2021divide} better in-domain.
    At inference, the $R$ (\eg a trained $k$-means module) simply routes based on the proximity to the nearest centroid to a single expert.

    \paragraph{Combining Embeddings.}
    Since the experts are trained independently of each other and from the dispatcher (besides starting off with the same weight scores), each model may produce wildly different embeddings for the same data point. Care must thus be taken regarding how these embeddings are combined. In addition, we require the model cascade to provide image embeddings of the same dimensionality as the original adapted model (the dispatcher). This is necessary in order to be able to do a fair comparison using a linear probe between the embedding quality from just the dispatcher versus the full cascade. Namely, the number of parameters of the supervised evaluation component, \ie the weight matrix of the linear probe is the same for just the dispatcher and the cascade.

    We thus decide to combine the individual models' embeddings with Principal Component Analysis (PCA) applied to a concatenation of the embeddings, ensuring that the output dimensionality is the same as that of the dispatcher alone. We evaluate two different ways to concatenate the embeddings. Which are \textbf{unconditional} and \textbf{conditional} concatenation.

    In the \textbf{unconditional} case, the dispatcher embedding $D(x)$ and all expert embeddings $E_1(x),\mathellipsis, E_K(x)$ are concatenated for each datapoint.
    The concatenated embedding $\overline{e}$ is then given by
    
    \begin{equation}
        \overline{e} = [D(x), E_1(x),...,E_K(x)]
    \end{equation}


    %
    In contrast, for the \textbf{conditional} case, only the dispatcher embedding $D(x)$ and one expert embedding $E_n(x)$ are combined for each datapoint.
    In this case, the expert is chosen by the router $R$ by simply picking the cluster closest to the datapoint. 
    The concatenated embedding $\overline{e}$ is then given by
    
   \begin{equation} 
   \overline{e} = [D(x), ..., E_i(x), ...], \,\text{where } i=R(x)
   \end{equation}

    The embedding is padded to have the same length as the unconditional concatenation, where the embeddings from each expert are placed in different dimensions. Effectively, the concatenated embedding is the same as for the unconditional case, except that the expert embeddings from other clusters are zero'ed out. 
    Consequently, only two forward passes (dispatcher and one expert) are needed to embed a new datapoint, as opposed to $K$.

    \paragraph{Dimensionality Reduction.}
    As these two approaches increase the dimensionality of the embeddings by a factor 6, we reduce dimensionality back to the original size ($F=2048$) for a fair comparison to the original model.
    To this end, we first center the features such that $e_i' = \overline{e_i} - c_i$, where $c_i$ is the mean of the $i$'th feature over the (concatenated) training set.
    The final cascade embedding is then given by $e^* = \mathrm{diag}(1/S)V^T\overline{e}$, where $V \in \mathbb{R}^{K\cdot F \times F}$ is the matrix of eigenvectors with the $F$ largest eigenvalues of the covariance matrix of the centered training set embeddings and $S$ the vector of eigenvalues. We divide by the eigenvalues to make sure all features have approximately equal variance, which is expected by the linear probe algorithm applied afterwards in some experiments.
     
    \section{Experiments}

\begin{table*}[
]

     \caption{
      \textbf{Masking is a viable strategy for downstream task adaptation.} 
      We report top-1 accuracy for nine image classification benchmarks with four different methods for downstream task transfer.
      For this experiment, we use supervised masks.
      We report the memory storage (in MegaBits, uncompressed) required to store the weights for a downstream task adaptation using the given method in the \textit{Size} column. This excludes having to store the original model weights as these only need to be stored once.\\
}
    \label{tab:masking_sup}
\centering
\small
  \setlength{\tabcolsep}{1.8pt}
     \begin{tabular}{@{}l c ccccccccc}
     \toprule
     Method & Size & \textsc{cifar10} & \textsc{cifar100} & \textsc{dtd} & \textsc{eurosat} & \textsc{flowers} & \textsc{pets} & \textsc{sun397} & \textsc{ucf101} \\
     \midrule
\multicolumn{6}{l}{\textit{Pretrained model: $\text{ResNet-18}_{\text{Supervised}}$}} \\
     $k$-NN & n/a & {0.826}	& 0.589	& 0.587	& 0.896	& 0.677	& 0.877	& {0.465}	& 0.596\\
     FFT & 368 & \textbf{0.950} & 0.759 & \textbf{0.692} & \textbf{0.969} & \textbf{0.963} &  \textbf{0.889} & \textbf{0.534} & \textbf{0.689} \\
    Mask & 12 & 0.949 & \textbf{0.760} & 0.660 & \textbf{0.969} & 0.955  & 0.852 & 0.479 & 0.659 \\
     \midrule
\multicolumn{6}{l}{\textit{Pretrained model: $\text{ResNet-50}_{\text{SwAV}}$}} \\
     $k$-NN & n/a & 0.832 & 0.497 & 0.693 & 0.754 & 0.728 &  0.726 & 0.535 & 0.604 \\
      FFT & 736 & \textbf{0.965} & \textbf{0.817} & \textbf{0.736} & \textbf{0.977} & \textbf{0.987} & \textbf{0.892} & \textbf{0.623} & \textbf{0.675} \\
     Mask & 23 & 0.962 & 0.798 & 0.709 & 0.974 & 0.967 & 0.863 & 0.482 & 0.628 \\
   \midrule
\multicolumn{6}{l}{\textit{Pretrained model: $\text{ViT-B/32}_{\text{CLIP}}$}} \\
     $k$-NN & n/a & {0.909}  & {0.694} & {0.666} & {0.858} & {0.818} & {0.768} & \textbf{0.687} & {0.753} \\
     FFT &  2752 & 0.958 & 0.821 & 0.723 & \textbf{0.979} & \textbf{0.974} & 0.885 & 0.640 & 0.809 \\
     Mask & 86 & \textbf{0.971} & \textbf{0.834} & \textbf{0.738} & 0.978 & 0.973 & \textbf{0.891} & 0.668 & \textbf{0.815} \\
    \bottomrule
    \end{tabular}
    \end{table*}
     


\subsection{Datasets and implementation}

We compare the performance of the found subnetworks with standard full-fine-tuning in the supervised and self-supervised adaptation setting.
In all our experiments we use SGD with a momentum of 0.9 and a batch size of 64.
For finding the subnetworks, we set the hyperparameters $\mu=0$, $S_0=1$ and use no weight decay, this configuration is used for all models, in all settings. We run experiments on the default ResNet-18 and ResNet-50 models from the TIMM repository \cite{rw2019timm} (supervised pretraining), SwAV \cite{caronUnsupervisedLearningVisual2021} and DINO \cite{caronEmergingPropertiesSelfSupervised2021} (self-supervised pretraining), as well as Vision Transformers (ViT-B/32) from CLIP \cite{radford2021learning} (cross-modal contrastive pretraining). We use the datasets from \cite{Loedeman2022prompt}.

\paragraph{Supervised ResNets.}
For the supervised adaptation baseline on the ImageNet-pretrained ResNet18 model from the TIMM repository, we use the most common hyperparameters from \cite{salmanAdversariallyRobustImageNet2020} for every dataset on that model architecture, which is a learning rate of 0.001 and a weight decay of 5e-4. However, we use a simple cosine learning rate decay rather than a step decay. For the SwAV-pretrained Resnet-50 model, we use a learning rate of 0.15 and a weight decay of 1e-6. The learning rate was determined by taking the original learning rate the model was trained on and scaling it by the new batch size ($0.15=0.6\cdot64/256$). For finding the subnetworks of these models, we use a learning rate of 50 and a cosine learning rate decay with a linear warmup up to epoch 40. Both for finding the subnetworks and for the baselines of these models, we train with a standard cross-entropy loss and a new random linear head for 150 epochs. We only train or mask the convolutional and downsampling layers.

\paragraph{Supervised Transformers.}
For the CLIP-pretrained vision transformer baseline and linear probe, we use the results from \cite{bahngExploringVisualPrompts2022,Loedeman2022prompt}, which uses the original head from the CLIP model and a cosine similarity metric between the image and text embeddings as the output logits, before using a cross-entropy loss.
For finding the subnetworks of this model, we use the same loss and number of epochs (32) but stick to a simple cosine learning rate decay schedule with a learning rate of 10. We only mask the projections and the Multilayer Perceptrons (MLP) after each attention block.
All supervised experiments use data augmentations from~\cite{Loedeman2022prompt}.

\paragraph{Self-Supervised ResNets.}
The self-supervised experiments are done using the augmentations and other default settings from SwAV \cite{caronEmergingPropertiesSelfSupervised2021}, except with a learning rate of 0.15, batch size of 64, no warm up and 500 prototypes. For finding subnetworks with self-supervision, we keep the prototypes and linear head trainable using the aforementioned hyperparameters (which are thrown away regardless), we only find a mask for the backbone, using the same parameters as for the supervised experiments. We train for 150 epochs and start the queue after 30 epochs.


\paragraph{Model Cascade.} For the dispatcher in the Model Cascade, we train \textsc{cifar100} for 150 epochs and \textsc{inat500} for 54 epochs, which corresponds to approximately the same number of steps. The experts are also trained for approximately the same number of steps ($\approx117188$), based on the formula: $\mathcal{E} = 50000 / \mathcal{D} * 150$, where $\mathcal{D}$ is the dataset size, which varies for each cluster, and $\mathcal{E}$ is the number of epochs, rounded to the nearest integer. We start the queue after the first 1/5 of the epochs. For the router $R$ we choose a 5-way Gaussian Mixture Model applied on dimensionality-reduced feature vectors using the 20 largest components of a Principal Component Analysis.

\begin{table*}
    \caption{\textbf{Comparison of our masking mechanism (SMN) with Ramanujan~\etal~\cite{ramanujanWhatHiddenRandomly2020}'s topk\% method.}
      For completeness, we run the method of Ramanujan~\etal
      with the sparsity-level that was found by our method.
      We report top-1 accuracy on different downstream tasks using different evaluation methods.
      In the last row, we report the sparsity levels found by our method on the different datasets.
      All numbers are reported by masking with a self-supervised loss, starting from a ResNet-50 pretrained with SwAV.
      \\}
    \label{tab:rama}
    
\centering
\small
  \setlength{\tabcolsep}{3pt}
     \begin{tabular}{@{}l c cccccccc}
     \toprule
     Method & Sparsity & \textsc{cifar10} & \textsc{cifar100} & \textsc{dtd} & \textsc{eurosat} & \textsc{flowers} & \textsc{sun397} & \textsc{ucf101} \\
     \midrule
\multicolumn{6}{l}{\textit{$k$-NN evaluation}} \\
        topk\% & 50\% & 0.891	& 0.564 &	0.465 &	0.968 &	0.403	&	0.460	& 0.439 \\ %
        topk\% & found & 0.908	& 0.630 &	0.671 &	\textbf{0.973} &	0.903	 &	\textbf{0.519}	& \textbf{0.572} \\
        SMN & found & \textbf{0.921}	& \textbf{0.656} &	\textbf{0.674} & 0.971 &	\textbf{0.920}	&	0.518	& 0.549 \\
     \midrule
\multicolumn{6}{l}{\textit{Linear probe evaluation}} \\
         topk\% & 50\% & 0.913	& 0.678 & 0.536 & 0.980 & 0.800 & 0.565 & 0.548 \\
         topk\% & found & 0.940	& 0.747 & \textbf{0.733} & \textbf{0.984} & \textbf{0.985} & 0.630 & 0.690 \\
         SMN & found & \textbf{0.950} & \textbf{0.769} & 0.714 & 0.983 & 0.983 & \textbf{0.631} & \textbf{0.697} \\
        \midrule 
        \midrule
                \multicolumn{2}{l}{Sparsity level:} & 91.4\% & 92.5\% & 98.8\% & 96.7\% & 98.6\% & 94.4\% & 95.7\% \\
    \bottomrule
    \end{tabular}
\end{table*} 

\paragraph{Evaluation.}
Linear probes are done with logistic regression on the unaugmented training set embeddings, as done by \cite{caronEmergingPropertiesSelfSupervised2021}. $k$-NN evaluations are done with the default settings from the same paper, with 200 neighbors and a temperature of $0.1$. Source code is available at: \url{https://github.com/alvitawa/UnsupervisedMasking}.




\subsection{Masking is a viable adaptation strategy}
In this section, we validate that masking weights of a pretrained network is a viable solution to adapt it to a given downstream task by comparing our masking adaptation method to more standard transfer techniques such as nearest neighbors classification ($k$-NN) and full fine-tuning of the model parameters (FFT).
For this experiment, we consider masking with a supervised loss to prevent confounding factors compared with full fine-tuning.
For each of these methods, in addition to reporting the accuracy obtained of the downstream tasks, we also report the memory storage required to deploy the transfer technique.
For $k$-NN, we consider that the memory requirement is none.
However, one could argue that it is still required to store the embeddings of the training set of the dataset, which can quickly be at the order of magnitude of the gigabit for large datasets such as iNaturalist and embeddings of high dimension like $\text{ViT-B/32}_{\text{CLIP}}$.

Results comparing $k$-NN, full fine-tuning (FFT) and supervised masking adaptation (Mask) strategies are reported in \cref{tab:masking_sup}.
We observe that masking produces good transfer performance on a large set of downstream tasks, and is even competitive with full fine-tuning.
We see in \cref{tab:masking_sup} that this observation is consistent both across network architectures (convolutional neural network and vision transformers) and pretraining paradigms (supervised, self-supervised and image-text contrastive).
Overall, results in \cref{tab:masking_sup} show that even though not widely adopted by the deep learning community, masking is a viable and storage-efficient adaptation technique.
In addition, compared to full fine-tuning it requires 32 times less memory storage to deploy when looking at the uncompressed number of bits. However, in practice it can require 78.8 times less memory to store the masks due to the fact that the masks are more easily compressible than a full copy of model weights. This is shown experimentally in Appendix \ref{app:cmprs}.


\subsection{Self-supervised adaptation with self-masking}

\begin{figure*}
    \vspace{1cm}
    \begin{subfigure}{.35\linewidth}
        \centering
        \begin{tikzpicture}
        \begin{semilogxaxis}[
            width=5cm,
            xlabel={Label percentage (\%)},
            ylabel={Accuracy (ratio)},
            xtick=data,
            xticklabels={.25, .5, 1, 2, 4, 10},
            x dir=reverse,
            legend pos=south west,
            grid=major,
        ]
        
        \addplot coordinates {
            (.25,0.2107) (.5,0.2794) (1,0.3846) (2,0.4915) (4,0.5714) (10,0.6516)
        };
        
        \addplot coordinates {
            (.25,0.2839) (.5,0.3744) (1,0.4747) (2,0.5666) (4,0.6259) (10,0.6877)
        };
        

        \addplot coordinates {
            (.25,0.1084) (.5,0.1997) (1,0.3226) (2,0.4308) (4,0.5433) (10,0.6417)
        };


        \addplot coordinates {
            (.25,0.015) (.5,0.0389) (1,0.1079) (2,0.2204) (4,0.362) (10,0.5269)
        };
        
        \end{semilogxaxis}
        \end{tikzpicture}

        \caption{\textsc{CIFAR100}}\label{fig:SPARC100line}
    \end{subfigure}
    \hfill
    \begin{subfigure}{.3\linewidth}
        \centering
        
        \begin{tikzpicture}
            \begin{semilogxaxis}[
            width=5cm,
            xlabel={Label percentage (\%)},
            xtick=data,
            xticklabels={.25, .5, 1, 2, 4, 10},
            x dir=reverse,
            legend style={overlay, at={(0.5,1.35)},anchor=north, legend columns=-1 },
            grid=major,
            ]
            
            \addplot coordinates {
                (.25,0.6994) (.5,0.7729) (1,0.8223) (2,0.8494) (4,0.8719) (10,0.8921)
            };
            \addlegendentry{Linear Probe}
            
            \addplot coordinates {
                (.25,0.8466) (.5,0.8857) (1,0.9014) (2,0.9206) (4,0.9312) (10,0.94)
            };
            \addlegendentry{Self-Masking Network+LP}
            

            \addplot coordinates {
                (10, .91) (4, .8687) (2, .843) (1, .7811) (0.5, .7071) (0.25, .5816)
            };
            \addlegendentry{Full Fine-Tuning}

            \addplot coordinates {
                (10, 0.883) (4, .8284) (2, .7746) (1, .6861) (0.5, .5379) (0.25, .3143)
            };
            \addlegendentry{Masking}
            \end{semilogxaxis}
        \end{tikzpicture}

        \caption{\textsc{CIFAR10}}\label{fig:sparC10line}
    \end{subfigure}
    \hfill
    \begin{subfigure}{.3\linewidth}
        \centering
        
        \begin{tikzpicture}
            \begin{axis}[
            width=5cm,
            xlabel={Label percentage (\%)},
            xtick=data,
            xticklabels={10, 9, 8, 7, 6, 5, 4},
            x dir=reverse,
            grid=major,
            ]
            
            \addplot coordinates {
                (10, 0.418) (9, 0.401058) (8, .379295) (7, .362771) (6, .340806) (5, .313552) (4, .270982)
            };
            
            \addplot coordinates {
                (10, 0.4336) (9, 0.420554) (8, .404937) (7, .388917) (6, .371335) (5, .347154) (4, .313401)
            };
            

            \addplot coordinates {
                (10, 0.287) (9, 0.341008) (8, 0.322116) (7, .302065) (6, .278489) (5, .257028) (4, .219345)
            };

            \addplot coordinates {
                (10, 0.0861965) (9, 0.112343) (8, 0.0986902) (7, .0869521) (6, .0731486) (5, .0654912) (4, .049068)
            };
            \end{axis}
        \end{tikzpicture}

        \caption{\textsc{SUN397}}\label{fig:sparC10line}
    \end{subfigure}
        \caption{\textbf{Low-shot adaptation with self-supervised self-masking.} We report top-1 accuracy after transferring to downstream tasks in a low-shot setting (\% of labeled data used).
        We compare different adaptation techniques: linear probing, full fine-tuning, self-masking with a supervised objective and self-masking in a self-supervised manner.
        The pretrained network is $\text{ResNet-50}_{\text{SwAV}}$.
        \\\\}
    \label{fig:lowshot}
\end{figure*}
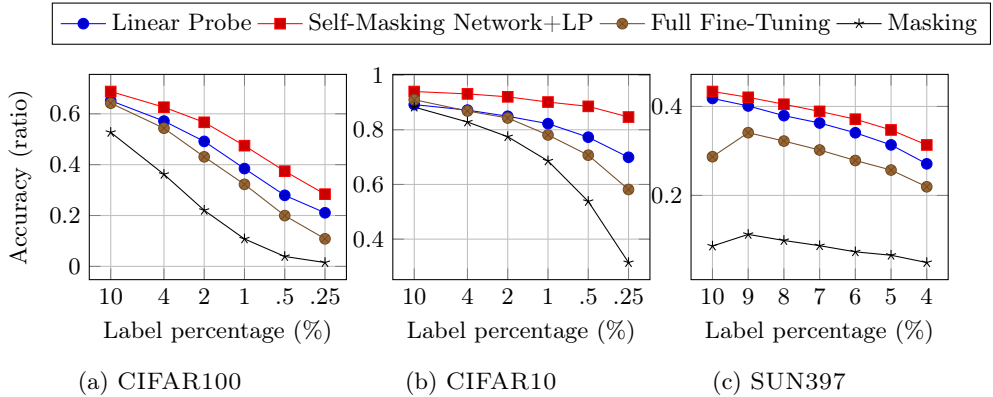
 
Though efficient on the memory storage axis, masking, as well as other typical transfer learning approaches, arguably lack efficiency in terms of \emph{label} utilization.
Indeed, transferring a network to a downstream task is in many cases a process requiring access to many labels in the downstream domain to achieve good performance.
In this section, we explore adaptation to a downstream domain in a label-efficient manner, by learning the transfer mask without using any labels.

\paragraph{Comparison of different masking strategies.}
In \cref{tab:rama}, we compare the performance of our Self-Masking Network (SMN) with our masking method and that of Ramanujan~\etal~\cite{ramanujanWhatHiddenRandomly2020}'s topk\% method, which activates a fixed percentage of the weights with the highest scores rather just weights with scores larger than the threshold. We find that our masking method (threshold masking) outperforms topk\% masking when the sparsity is set to 50\%, which is the sparsity level they use in their paper. We also find that if we use the sparsity which was automatically found by our method, their method can come close to and is sometimes able to outperform our method. However, since Ramanujan~\etal's method is not able to find this sparsity by itself, we argue that our masking method is more practical. 


\paragraph{Low-shot transfer.}

We propose to transfer to different downstream tasks by using only a fraction of the training labels, while keeping access to all training \emph{images}.
Results are presented in \cref{fig:lowshot}.
We observe that self-supervised self-masking outperforms conventional fine-tuning methods in all three datasets tested under low-shot conditions.
We also observe in \cref{fig:lowshot} that the difference in performance is maintained or even exacerbated as fewer labels are used.
This is likely due to the SMN's ability to utilize the unlabeled data for downstream adaptation.
Overall, self-supervised masking is able to overcome the shortcomings of supervised masking in situations where few labeled data is available, providing a solution for both label-efficient and storage-thrifty downstream task transfer.

\subsection{Model Cascade}

\begin{table*}[]
      \caption{\textbf{Self-masking cascade.}
      We compare vanilla self-masking with conditional and unconditional self-masking cascades.
      We also report the (uncompressed) storage requirements relative to the cost of storing the original pretrained model in 32-bit floating point, which we denote by $\beta$.
      We exclude the cost of storing the original model (of size $\beta$) as well as the PCA and GMM parameters, which are negligible. The pretrained backbone used is $\text{ResNet-50}_{\text{SwAV}}$.\\
      }
    \label{tab:supermask}
    \centering
    \begin{tabular}{l c c c}
        \midrule
        Method & Storage & \textsc{CIFAR100} & \textsc{iNat500} \\
        \midrule
        \multicolumn{2}{l}{\textit{$k$-NN evaluation}} \\
        Self-masking & $\beta / 32$ & 0.656 & 0.263 \\
        Self-masking + cascade (conditional) & $6 \beta / 32$ &  0.752 & 0.395 \\
        Self-masking + cascade (unconditional) & $6 \beta / 32$ &  \textbf{0.778} & \textbf{0.424}  \\
        \midrule
        \multicolumn{2}{l}{\textit{Linear probe evaluation}} \\
        Self-masking & $\beta / 32$ & 0.769 & 0.524 \\
        Self-masking + cascade (conditional) & $6 \beta / 32$ &  0.793 & 0.521 \\
        Self-masking + cascade (unconditional) & $6 \beta / 32$ &  \textbf{0.807} & \textbf{0.550}  \\
        \midrule
    \end{tabular}
\end{table*} 
In Table~\ref{tab:supermask}, we evaluate our cascade SMN's.
We observe that the cascade mechanism allows for improving the performance of self-supervised masking by large margins, \ie $+3.8$ points on \textsc{CIFAR-100} and $+2.6$ points on \textsc{iNat500} in linear probing evaluation.
This comes at the cost of an increase in storage requirement ($\times6$).
However, the storage requirement is still significantly ($\times5$) smaller than alternative transfer techniques like full finetuning, and does not use supervision for domain adaptation.
Finally, we also observe that the unconditional variant of the cascade performs better than the conditional model.
An explanation is that the experts learned from neighboring domains still capture useful features that are not completely orthogonal to each other and so benefit overall performances.
Overall, this experiment shows that masking the same model multiple times using self-supervision through our cascade mechanism enables more information to be extracted from the training set, resulting in better downstream accuracy.

    \section{Conclusion}
    




In this work, we started with the problem of adapting a pretrained model to a novel domain without knowledge of the final task or other manual annotations.
To this end, we proposed Self-Masking, a simple approach for learning lightweight binary masks in an unsupervised manner.
We have shown the benefits of Self-Masking are particularly pronounced for the important semi-supervised setting, where both large amounts of unlabeled data and small amounts of labeled data are available.
Finally, we have shown how our approach can be used to produce high-performing model cascades, by selecting domains and training expert models without any supervision. 
We believe this work will gain importance with the increase in parameter counts of vision models.
 

    \setcounter{table}{4}
    \setcounter{figure}{4}   
    \setcounter{equation}{5} 

\newpage
\bibliographystyle{splncs04}
\bibliography{045-main}

\newpage

    \appendix


\clearpage


\section{Verification of Hyperparameter-free masking}
\label{apx:expi}

In practice, it is not feasible to control the conditions perfectly, and randomness from CUDA operations or mini-batch sampling, as well as differences in behavior of floating point arithmetic under different scales will result in slightly different results. We show in this section that performance of the models remain generally unaffected, indicating that these invariances also hold in practice.

Figure \ref{fig:equi} shows an experimental verification of the proofs in Section \ref{sec:hyfr}. It can be seen that both experiments have a very similar loss curve, which is an indication that the models are behaving the same. Note that we controlled the most important random components such as the data sampling, however some randomness from CUDA operations could still be present, in addition to differences in behavior of floating point arithmetic under the different scales. For this reason we cannot expect in practice that both hyperparameter settings produce exactly the same masks.

Notice that it can be seen in Figure \ref{fig:o1o} that only doubling the learning rate gives the same loss curve as only increasing the threshold to $0.5$. This is in line with the theorems posited in the main paper, as increasing the threshold by $0.5$ is equivalent to reducing the score initialization by $0.5$, effectively halving it, which is in turn equivalent to doubling the learning rate.

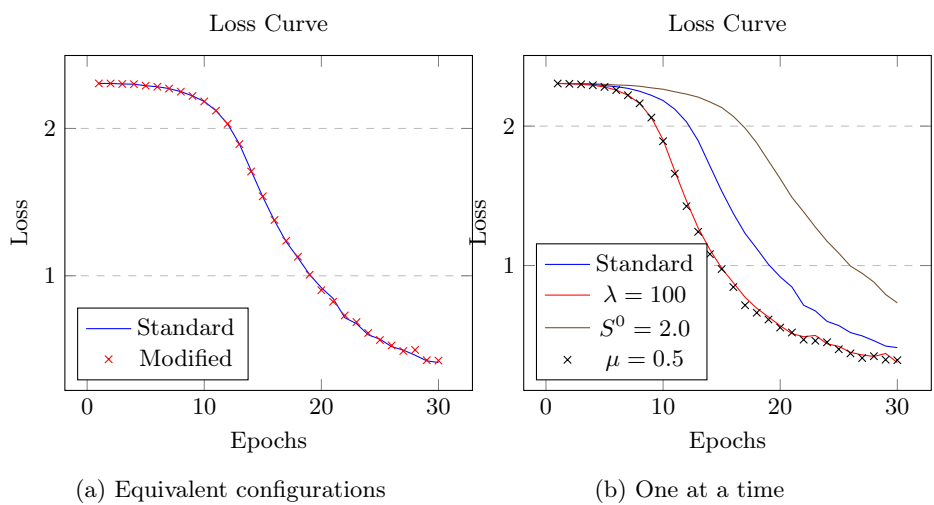
\begin{figure*}
    \centering
    \vspace{1cm}
    \begin{subfigure}{0.5\textwidth}
            \begin{tikzpicture}
            \begin{axis}[
                title={Loss Curve},
                width=7cm,
                xlabel={Epochs},
                ylabel={Loss},
                legend pos=south west,
                ymajorgrids=true,
                grid style=dashed,
            ]

            \addplot[
                color=blue,
                mark=none,
            ]
            coordinates {
            (1,2.305828332901)(2,2.30540108680725)(3,2.30212211608887)(4,2.30046963691711)(5,2.29013347625732)(6,2.28311514854431)(7,2.27093982696533)(8,2.2499372959137)(9,2.22092795372009)(10,2.18349003791809)(11,2.12098121643066)(12,2.030428647995)(13,1.89504551887512)(14,1.7138409614563)(15,1.53481411933899)(16,1.3733549118042)(17,1.23047280311584)(18,1.12144184112549)(19,1.00792455673218)(20,0.918778479099274)(21,0.846426486968994)(22,0.715238213539124)(23,0.674653172492981)(24,0.599408805370331)(25,0.569653153419495)(26,0.51900851726532)(27,0.495268195867538)(28,0.460688203573227)(29,0.422353446483612)(30,0.411089450120926)
            };

            \addplot[
                only marks,
                color=red,
                mark=x
            ]
            coordinates {
            (1,2.305828332901)
            (2,2.30540108680725)
            (3,2.30212211608887)
            (4,2.30046963691711)
            (5,2.29013347625732)
            (6,2.28311514854431)
            (7,2.27093982696533)
            (8,2.2499372959137)
            (9,2.22092795372009)
            (10,2.18349003791809)
            (11,2.12098121643066)
            (12,2.03144717216492)
            (13,1.89374458789825)
            (14,1.7078857421875)
            (15,1.53935134410858)
            (16,1.37850046157837)
            (17,1.23757600784302)
            (18,1.12939822673798)
            (19,1.00702214241028)
            (20,0.901998817920685)
            (21,0.823670327663422)
            (22,0.730610072612763)
            (23,0.685326337814331)
            (24,0.611143171787262)
            (25,0.563665211200714)
            (26,0.52739679813385)
            (27,0.489237159490585)
            (28,0.496811896562576)
            (29,0.426518708467484)
            (30,0.424037992954254)
            };

            \legend{Standard, Modified}
            
            \end{axis}
            \end{tikzpicture}
        \caption{Equivalent configurations}
        \label{fig:equi}
    \end{subfigure}%
    \begin{subfigure}{0.5\textwidth}
            \begin{tikzpicture}
            \begin{axis}[
                title={Loss Curve},
                width=7cm,
                xlabel={Epochs},
                ylabel={Loss},
                legend pos=south west,
                ymajorgrids=true,
                grid style=dashed,
            ]
            
            \addplot[
                color=blue,
                mark=none,
            ]
            coordinates {
            (1,2.305828332901)(2,2.30540108680725)(3,2.30212211608887)(4,2.30046963691711)(5,2.29013347625732)(6,2.28311514854431)(7,2.27093982696533)(8,2.2499372959137)(9,2.22092795372009)(10,2.18349003791809)(11,2.12098121643066)(12,2.030428647995)(13,1.89504551887512)(14,1.7138409614563)(15,1.53481411933899)(16,1.3733549118042)(17,1.23047280311584)(18,1.12144184112549)(19,1.00792455673218)(20,0.918778479099274)(21,0.846426486968994)(22,0.715238213539124)(23,0.674653172492981)(24,0.599408805370331)(25,0.569653153419495)(26,0.51900851726532)(27,0.495268195867538)(28,0.460688203573227)(29,0.422353446483612)(30,0.411089450120926)
            };

            \addplot+[
                mark=none,
            ]
            coordinates {
            (1,2.305828332901)
            (2,2.30342936515808)
            (3,2.299729347229)
            (4,2.29300165176392)
            (5,2.28116321563721)
            (6,2.25851464271545)
            (7,2.22152400016785)
            (8,2.1633780002594)
            (9,2.06109619140625)
            (10,1.9012154340744)
            (11,1.6775951385498)
            (12,1.46240150928497)
            (13,1.26910698413849)
            (14,1.11376762390137)
            (15,0.981271982192993)
            (16,0.876228630542755)
            (17,0.772507786750794)
            (18,0.69322794675827)
            (19,0.631247401237488)
            (20,0.567964315414429)
            (21,0.511747419834137)
            (22,0.489105671644211)
            (23,0.498668044805527)
            (24,0.438872426748276)
            (25,0.419795334339142)
            (26,0.377143710851669)
            (27,0.358603149652481)
            (28,0.347572207450867)
            (29,0.369342625141144)
            (30,0.304389029741287)
            };

            \addplot+[mark=none,]
            coordinates {
            (1,2.305828332901)(2,2.30639433860779)(3,2.30430889129639)(4,2.30375003814697)(5,2.29970741271973)(6,2.29512977600098)(7,2.29252099990845)(8,2.28609967231751)(9,2.27443552017212)(10,2.26463413238525)(11,2.24633288383484)(12,2.22957801818848)(13,2.20796871185303)(14,2.17338395118713)(15,2.1334764957428)(16,2.06994366645813)(17,1.98549067974091)(18,1.87890315055847)(19,1.75041031837463)(20,1.62321305274963)(21,1.48915195465088)(22,1.38504493236542)(23,1.27565467357636)(24,1.17446327209473)(25,1.08860266208649)(26,0.996037483215332)(27,0.943734884262085)(28,0.881288945674896)(29,0.792271018028259)(30,0.732134759426117)
            };


            \addplot+[
                only marks,
                mark=x
            ]
            coordinates {
                (1,2.305828332901)
                (2,2.30342936515808)
                (3,2.299729347229)
                (4,2.29300165176392)
                (5,2.28116321563721)
                (6,2.25851464271545)
                (7,2.22152400016785)
                (8,2.1633780002594)
                (9,2.06136560440063)
                (10,1.89182603359222)
                (11,1.65883266925812)
                (12,1.42556405067444)
                (13,1.24208128452301)
                (14,1.08170640468597)
                (15,0.974703371524811)
                (16,0.845559358596802)
                (17,0.714986622333527)
                (18,0.661494672298431)
                (19,0.612804174423218)
                (20,0.555099904537201)
                (21,0.519690811634064)
                (22,0.468415200710297)
                (23,0.460538595914841)
                (24,0.449539005756378)
                (25,0.399415194988251)
                (26,0.369521588087082)
                (27,0.336500436067581)
                (28,0.349646478891373)
                (29,0.324695020914078)
                (30,0.320863753557205)
            };

            \legend{Standard, $\lambda=100$, $S^0=2.0$, $\mu=0.5$}
            \end{axis}
            \end{tikzpicture}
        \caption{One at a time}
        \label{fig:o1o}
    \end{subfigure}%
    \caption{Left: Comparison of the loss for standard training ($\lambda=50$, $S^0=1.0$, $\mu=0.0$) with equivalent but distinct hyperparameters ($\lambda=100$, $S^0=2.5$, $\mu=0.5$). Shown is the progression of the loss during training. Right: How the curves would differ when applying standard training, except changing only one of the hyperparameters at a time (doubling the learning rate, doubling the score initialization or shifting the threshold with 0.5). These experiments were run on CIFAR-10, with the supervised masking algorithm.}
    \label{fig:experi}
\end{figure*}

\section{Ablations}


\begin{table*}
     \caption{\textbf{Ablations.} We ablate the key components of our Self-Masking Network: the number of prototypes, network initialization, and the layers that are masked. We evaluate via $k$-NN evaluation. The row marked with an asterisk (*) indicates the configuration used in the rest of the paper.\\}
    \label{tab:abla}
    \setlength{\tabcolsep}{7pt}
    	\begin{subtable}[h]{0.5\textwidth}
                \caption{
  Varying prototypes
  }
  \label{tab:abl:protos}
  \centering
    \begin{tabular}{lccc}
    \toprule
    & \textsc{cifar10}  &  \textsc{dtd} &  \textsc{sun397} \\
        \midrule
        50 & 0.510 & 0.644 & 0.503\\
        500* & \textbf{0.921} & \textbf{0.674} & \textbf{0.518} \\
        5000 & 0.920 & 0.640 & 0.496\\
    \bottomrule
  \end{tabular}     	\end{subtable}
    	\begin{subtable}[h]{0.5\textwidth}
                \caption{
  Keeping layers frozen.
  }
  \label{tab:abl:frozen}
  \centering
    \begin{tabular}{lccc}
    \toprule
    &  \textsc{cifar10}  &  \textsc{dtd} &  \textsc{sun397} \\
        \midrule
        none & 0.560 & 0.434 & 0.177 \\
        BNs*  & \textbf{0.921} & \textbf{0.674} & \textbf{0.518} \\
        biases & 0.681 & 0.505 & 0.266 \\
    \bottomrule
  \end{tabular}     	\end{subtable}
     \\
    \hfill
     	\begin{subtable}[h]{1\textwidth}
                \caption{
  Varying the initialiation
}
  \label{tab:abl:init}
  \centering
    \begin{tabular}{lccc}
    \toprule
    &  \textsc{cifar10}  &  \textsc{dtd} &  \textsc{sun397} \\
        \midrule
        DINO & 0.915 & \textbf{0.695} & \textbf{0.529} \\
        SwAV* & \textbf{0.921} & 0.674 & 0.518 \\
        TIMM & 0.900 & 0.623 & 0.505 \\
    \bottomrule
  \end{tabular}     	\end{subtable} 
    \hfill
\end{table*}

In this section, we ablate several components of our model. 

\textbf{Number of prototypes. } We find that as long as our model is provided with enough capacity of 500 or more prototypes, the exact number does not matter and we achieve good performances, echoing previous self-supervised clustering findings~\cite{asano2020self,caron2018deep}

\textbf{Excluding weights from masking.} Finally, in \cref{tab:abl:frozen} we evaluate what happens when the masking strategy is applied to all weights (`none' are frozen), all excluding the batch-norms (our setup) or all except the bias terms. We find that, as in the original masking formulations~\cite{ramanujanWhatHiddenRandomly2020,mallya2018piggyback}, it is essential to not apply masking to the batch-norm statistics, this could be due to the drastic effect zeroing out the batch norm weight can have, which disables a node completely.

\textbf{Pretrained weights.} In \cref{tab:abl:init}, we evaluate whether our self-masking can be run on differently pretrained backbones. 
Despite our loss utilising a self-supervised clustering formulation, we find it performs just as well on a teacher-distillation pretrained method, DINO~\cite{caronEmergingPropertiesSelfSupervised2021} outperforming the supervisedly pretrained TIMM weights.

\section{Mask compression}

It turns out that simply compressing the masks with off-the-shelf approaches can significantly reduce the storage costs. In particular, masks are more easily compressible than neural network weights, which further reduces the storage costs of masks when compared to a full set of fine-tuned weights. We found that, after compressing both, the masks take up 1.305\% of the storage costs when compared to the full-fine-tuned network weights using the models trained on cifar100 (SMN vs self-sup fine-tuning) and 1.2695\% using the models trained on SUN397. If neither are compressed at all, the masks take up 3.125\% the storage cost of the full fine-tuned weights (using f32). See the full experiment results below in Tables 
\ref{tab:cmp1}
 and \ref{tab:cmp2}. Overall, compressing the masks reduces their storage cost by up to 83\%. 

\label{app:cmprs}
\begin{table*}[h]
\caption{Compressing learned masks (using the Self-Masking method) with different off-the-shelf compression methods vs compressing the weights after Full Fine-Tuning (cifar100 dataset).}
\label{tab:cmp1}
\begin{subtable}{.5\linewidth}
\centering
\caption{Masked}
\begin{tabular}{llc}
\toprule
Method & Masks & {Reduction (\%)} \\ 
\midrule
gzip   & 23462592 & 78.32 \\
bz2    & 23462592 & 79.24 \\
lzma   & 23462592 & 80.73 \\
lz4    & 23462592 & 59.06 \\
snappy & 23462592 & 62.57 \\
\bottomrule
\end{tabular}
\end{subtable}%
\begin{subtable}{.5\linewidth}
\centering
\caption{Trained}
\begin{tabular}{llc}
\toprule
Method & f32's & {Reduction (\%)} \\ 
\midrule
gzip   & 23462592 & 6.99 \\
bz2    & 23462592 & 4.69 \\
lzma   & 23462592 & 7.70 \\
lz4    & 23462592 & -0.39 \\
snappy & 23462592 & -0.0046 \\
\bottomrule
\end{tabular}
\end{subtable}
\end{table*}

\begin{table*}[h]
\caption{Same as \cref{tab:cmp1}, but with SUN397 dataset.}
\label{tab:cmp2}
\begin{subtable}{.5\linewidth}
\centering
\caption{Masked}
\begin{tabular}{llc}
\toprule
Method & Masks & {Reduction (\%)} \\ 
\midrule
gzip   & 23462592 & 81.25 \\
bz2    & 23462592 & 82.11 \\
lzma   & 23462592 & 83.32 \\
lz4    & 23462592 & 62.52 \\
snappy & 23462592 & 66.54 \\
\bottomrule
\end{tabular}
\end{subtable}%
\begin{subtable}{.5\linewidth}
\centering
\caption{Trained}
\begin{tabular}{llc}
\toprule
Method & f32's & {Reduction (\%)} \\ 
\midrule
gzip   & 23462592 & 6.99 \\
bz2    & 23462592 & 4.69 \\
lzma   & 23462592 & 7.69 \\
lz4    & 23462592 & -0.39 \\
snappy & 23462592 & -0.0046 \\
\bottomrule
\end{tabular}
\end{subtable}
\end{table*}

\section{Found Sparsities}

In this section we document the sparsities found by our method in detail.

\subsection{Supervised sparsities}

Table \ref{tab:sprs} shows the sparsities found by the supervised masking experiments shown in Table 1
. Notably, the CLIP vision transformer deactivates very few weights, especially for some datasets. An explanation could be the fact that this model was trained with a pre-aligned classification head, thanks to CLIP's textual embeddings, thus requiring less invasive alterations. However, further analysis is necessary to determine whether another factor is responsible, such as the different architecture.

The vision transformer is further analyzed in Figure \ref{fig:vit}. It would appear that the weights in the middle layers generally get masked the most. The biases however do not show a similar pattern. Interestingly though, the biases of the key projections (\textbf{key\_proj}) are left completely unchanged by the algorithm. A similar analysis for the ResNet-50 model can be seen in Figure \ref{fig:daaa}.

\newcommand{\mytablereadperiodic}[1]{%
    \pgfplotstabletypeset[
        col sep = tab,
        string type,
        string replace*={_}{\textsubscript},
        every head row/.style={before row=\toprule,after row=\midrule},
        every last row/.style={after row=\bottomrule},
        every row no 2/.style={after row=\midrule},
        every row no 5/.style={after row=\midrule},
        every row no 9/.style={after row=\midrule},
        skip rows between index={6}{1000}
    ]
    {#1}
}

\begin{table*}
   \setlength{\tabcolsep}{0.3em}
   \captionsetup{aboveskip=5pt,belowskip=5pt}
    \caption{Percentage of weights that are active after masking different architectues in the supervised setting. }
    \label{tab:sprs}
    \centering
    \mytablereadperiodic{./plots/part2_accuracy/full/part2_sparsity.csv}
\end{table*}

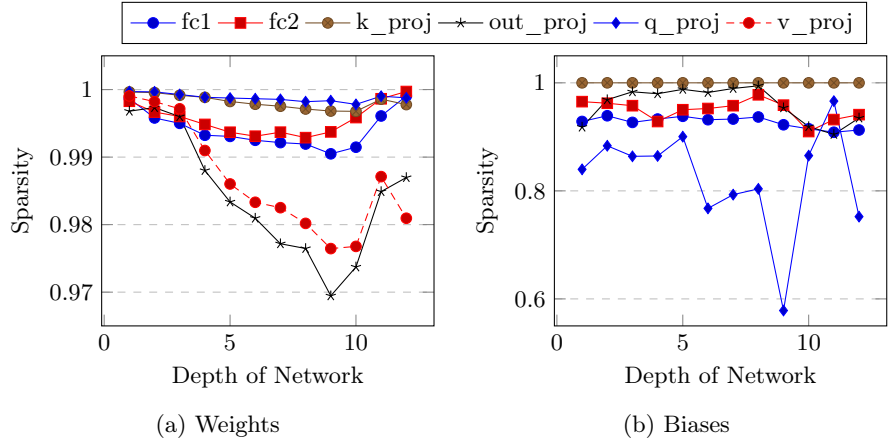
\begin{figure*}
    \centering
    \vspace{1cm}
    \begin{subfigure}{0.5\textwidth}
        \centering
        \begin{tikzpicture}
\begin{axis}[
    width=6cm,
    xlabel={Depth of Network},
    ylabel={Sparsity},
    ymin=0.965, ymax=1.005,
    legend style={overlay, at={(0.06,1.20)},anchor=north west, legend columns=-1 },
    ymajorgrids=true,
    grid style=dashed,
]

\addplot+[
]
coordinates {
    (1, 0.9986860603094101) 
    (2, 0.9957973808050156) 
    (3, 0.9950021058320999) 
    (4, 0.9932392984628677) 
    (5, 0.993068277835846) 
    (6, 0.99249267578125) 
    (7, 0.9921491593122482) 
    (8, 0.9919358491897583) 
    (9, 0.9904863685369492) 
    (10, 0.9914673715829849) 
    (11, 0.9960905611515045) 
    (12, 0.9990468472242355) 
};

\addlegendentry{fc1}

\addplot+[
]
coordinates {
    (1, 0.9982739686965942) 
    (2, 0.9966697841882706) 
    (3, 0.996020644903183) 
    (4, 0.9948420077562332) 
    (5, 0.9936822205781937) 
    (6, 0.9930944442749023) 
    (7, 0.9936909228563309) 
    (8, 0.992881253361702) 
    (9, 0.993736058473587) 
    (10, 0.9958572387695312) 
    (11, 0.9986405968666077) 
    (12, 0.9997123181819916) 
};

\addlegendentry{fc2}

\addplot+[
]
coordinates {
    (1, 0.9996465146541595) 
    (2, 0.9995494484901428) 
    (3, 0.9991556853055954) 
    (4, 0.9988509267568588) 
    (5, 0.9982388764619827) 
    (6, 0.9978315681219101) 
    (7, 0.9975357204675674) 
    (8, 0.9971220344305038) 
    (9, 0.9968210905790329) 
    (10, 0.9967782944440842) 
    (11, 0.9985826313495636) 
    (12, 0.997779443860054) 
};

\addlegendentry{k\_proj}

\addplot+[
]
coordinates {
    (1, 0.9967982172966003) 
    (2, 0.9973377734422684) 
    (3, 0.9959950000047684) 
    (4, 0.9880201667547226) 
    (5, 0.9833624213933945) 
    (6, 0.9809523820877075) 
    (7, 0.9771686345338821) 
    (8, 0.9764637649059296) 
    (9, 0.9694540351629257) 
    (10, 0.9737290441989899) 
    (11, 0.9848912507295609) 
    (12, 0.9869753569364548) 
};

\addlegendentry{out\_proj}

\addplot+[
]
coordinates {
    (1, 0.9997041523456573) 
    (2, 0.9996609091758728) 
    (3, 0.9992714077234268) 
    (4, 0.9988589733839035) 
    (5, 0.9987483620643616) 
    (6, 0.9986500442028046) 
    (7, 0.9985482841730118) 
    (8, 0.9982083737850189) 
    (9, 0.9983571469783783) 
    (10, 0.9978090971708298) 
    (11, 0.9990221709012985) 
    (12, 0.9987775981426239) 
};

\addlegendentry{q\_proj}

\addplot+[
]
coordinates {
    (1, 0.9990590512752533) 
    (2, 0.998213455080986) 
    (3, 0.9971495866775513) 
    (4, 0.9909761399030685) 
    (5, 0.9860216975212097) 
    (6, 0.983311116695404) 
    (7, 0.9825096130371094) 
    (8, 0.9801843464374542) 
    (9, 0.9764493405818939) 
    (10, 0.9767765551805496) 
    (11, 0.9871092885732651) 
    (12, 0.9809468686580658) 
};

\addlegendentry{v\_proj}

    
    
    \end{axis}
    \end{tikzpicture}
         \caption{Weights}
    \end{subfigure}%
    \begin{subfigure}{0.5\textwidth}
        \centering
        
\begin{tikzpicture}
\begin{axis}[
    width=6cm,
    xlabel={Depth of Network},
    ylabel={Sparsity},
    ymin=0.55, ymax=1.05,
    ymajorgrids=true,
    grid style=dashed,
]

\addplot+[
]
coordinates {
(1, 0.9284668117761612) 
(2, 0.9390462338924408) 
(3, 0.9267578274011612) 
(4, 0.932942733168602) 
(5, 0.938069686293602) 
(6, 0.931722030043602) 
(7, 0.932942733168602) 
(8, 0.9367675930261612) 
(9, 0.9223632961511612) 
(10, 0.9149577021598816) 
(11, 0.908528670668602) 
(12, 0.912679061293602) 
};


\addplot+[
]
coordinates {
(1, 0.965169295668602) 
(2, 0.9622395932674408) 
(3, 0.9576823115348816) 
(4, 0.9283854365348816) 
(5, 0.9501953274011612) 
(6, 0.952473983168602) 
(7, 0.9576823115348816) 
(8, 0.9778645932674408) 
(9, 0.9586588740348816) 
(10, 0.9098307490348816) 
(11, 0.9319661557674408) 
(12, 0.9410807490348816) 
};


\addplot+[
]
coordinates {
(1, 1.0) 
(2, 1.0) 
(3, 1.0) 
(4, 1.0) 
(5, 1.0) 
(6, 1.0) 
(7, 1.0) 
(8, 1.0) 
(9, 1.0) 
(10, 1.0) 
(11, 1.0) 
(12, 1.0) 
};


\addplot+[
]
coordinates {
(1, 0.9179687798023224) 
(2, 0.9687500298023224) 
(3, 0.9833984524011612) 
(4, 0.9798177182674408) 
(5, 0.9879557490348816) 
(6, 0.9820963740348816) 
(7, 0.9895833432674408) 
(8, 0.9944661557674408) 
(9, 0.9537760615348816) 
(10, 0.9189453274011612) 
(11, 0.904622420668602) 
(12, 0.934895858168602) 
};


\addplot+[
]
coordinates {
(1, 0.8398437798023224) 
(2, 0.8834635615348816) 
(3, 0.8639323115348816) 
(4, 0.864583358168602) 
(5, 0.9003906548023224) 
(6, 0.7675781548023224) 
(7, 0.7929687649011612) 
(8, 0.8037109524011612) 
(9, 0.5784505382180214) 
(10, 0.8652343899011612) 
(11, 0.966145858168602) 
(12, 0.752278670668602) 
};


    
    
    \end{axis}
    \end{tikzpicture}
         \caption{Biases}
    \end{subfigure}%
    \caption{Sparsity levels found across layers by our masking algorithm, when applied to the $\text{ViT-B/32}_{\text{CLIP}}$ model. Displayed is the average across the datasets CIFAR-10, CIFAR-100, SUN397 and DTD.}
    \label{fig:vit}
\end{figure*}

\subsection{Comparison}
\label{apx:comp}

Figure \ref{fig:daaa} shows the sparsities found at different layers of the network by our self-supervised masking method in a comparison between self-supervised and supervised masking.  The corresponding overall sparsities can be seen in Table 2
. Notably, earlier ResNet blocks appear to get masked more, but within a block, the pattern reverses, with later layers seeming to get masked more. Also, the last downsampling layer is masked heavily. This appears to be generally true for both the supervised and self-supervised settings. In addition, these effects seem to remain across datasets, as can be seen from the relatively small standard deviations.

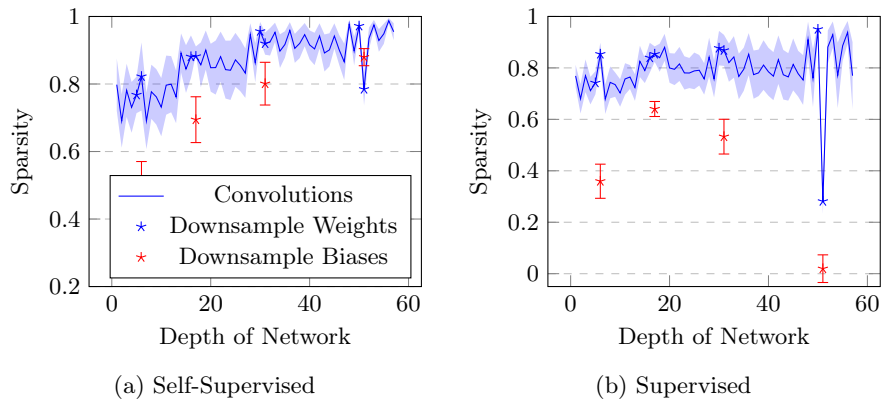
\begin{figure*}
    \centering
    \begin{subfigure}{0.5\textwidth}
        \centering
        \begin{tikzpicture}
\begin{axis}[
    width=6cm,
    xlabel={Depth of Network},
    ylabel={Sparsity},
    ymin=0.2, ymax=1,
    ytick={0.2,0.4,0.6,0.8,1},
    legend pos=south east,
    ymajorgrids=true,
    grid style=dashed,
    legend style={font=\footnotesize},
]
    
    \addplot[
        name path global=middle,
        color=blue,
    ]
    coordinates {
        (1,0.7980442047119141) 
        (2,0.69036865234375) 
        (3,0.7813178300857544) 
        (4,0.730865478515625) 
        (5,0.7676849365234375) 
        (6,0.822265625) 
        (7,0.690460205078125) 
        (8,0.776943638920784) 
        (9,0.7618865966796875) 
        (10,0.7314453125) 
        (11,0.796630859375) 
        (12,0.7994537353515625) 
        (13,0.7616500854492188) 
        (14,0.8919779509305954) 
        (15,0.8443527221679688) 
        (16,0.8807735443115234) 
        (17,0.8818359375) 
        (18,0.8567733764648438) 
        (19,0.900536447763443) 
        (20,0.8482437133789062) 
        (21,0.8486251831054688) 
        (22,0.8809356838464737) 
        (23,0.8426094055175781) 
        (24,0.8407325744628906) 
        (25,0.8715938925743103) 
        (26,0.8546791076660156) 
        (27,0.8321208953857422) 
        (28,0.946408599615097) 
        (29,0.8632287979125977) 
        (30,0.9561572074890137) 
        (31,0.919677734375) 
        (32,0.9249753952026367) 
        (33,0.9530703276395798) 
        (34,0.8973512649536133) 
        (35,0.9170780181884766) 
        (36,0.9586927741765976) 
        (37,0.9055862426757812) 
        (38,0.9204273223876953) 
        (39,0.9461788684129715) 
        (40,0.915740966796875) 
        (41,0.9038944244384766) 
        (42,0.9423476308584213) 
        (43,0.8919134140014648) 
        (44,0.9010648727416992) 
        (45,0.9421530961990356) 
        (46,0.9003000259399414) 
        (47,0.8657007217407227) 
        (48,0.9775029420852661) 
        (49,0.8975632190704346) 
        (50,0.9717733860015869) 
        (51,0.7845458984375) 
        (52,0.9349155426025391) 
        (53,0.9753208309412003) 
        (54,0.9289469718933105) 
        (55,0.9486904144287109) 
        (56,0.9873516410589218) 
        (57,0.9547562599182129) 
    };

    \addlegendentry{Convolutions}

    \addplot[name path global=lower,draw=none,forget plot] coordinates {
        (1,0.7205912330155557) 
        (2,0.6113026310268719) 
        (3,0.7024797701076558) 
        (4,0.6555560037322238) 
        (5,0.7136709375310336) 
        (6,0.7212869847288237) 
        (7,0.6070380007998086) 
        (8,0.6966441477892905) 
        (9,0.6797046753674447) 
        (10,0.6450509058254736) 
        (11,0.7090754046870902) 
        (12,0.7162056151906067) 
        (13,0.6838871992133646) 
        (14,0.8342072965590984) 
        (15,0.7736227179645137) 
        (16,0.8175931263494021) 
        (17,0.8868038569885004) 
        (18,0.7902561840124614) 
        (19,0.8425503207919416) 
        (20,0.7772473385611776) 
        (21,0.7685815413654216) 
        (22,0.8067173378301086) 
        (23,0.756824402909274) 
        (24,0.7527866918865205) 
        (25,0.7917998177482671) 
        (26,0.7783281010389498) 
        (27,0.7588873778199642) 
        (28,0.912936745477352) 
        (29,0.8051548171879324) 
        (30,0.9286190520772982) 
        (31,0.8868038569885004) 
        (32,0.8842586882117154) 
        (33,0.9244657413404782) 
        (34,0.8474605729350797) 
        (35,0.8778691662744459) 
        (36,0.9341438138434894) 
        (37,0.8626426285501574) 
        (38,0.8768441805438448) 
        (39,0.9116960480875453) 
        (40,0.8727810713695946) 
        (41,0.8559671768714768) 
        (42,0.9048934325145563) 
        (43,0.837027969024243) 
        (44,0.8518559262684339) 
        (45,0.9045566171382924) 
        (46,0.8558744898330631) 
        (47,0.8117511477768975) 
        (48,0.9642811116440507) 
        (49,0.8545126615448014) 
        (50,0.9568780045064268) 
        (51,0.736420616299354) 
        (52,0.9026567214123057) 
        (53,0.9608689501669224) 
        (54,0.8955023670845669) 
        (55,0.9186994739773048) 
        (56,0.97932020205292) 
        (57,0.9296585673754681) 
    };

    \addplot[name path global=upper,draw=none,forget plot] coordinates {
        (1,0.8754971764082724) 
        (2,0.7694346736606281) 
        (3,0.860155890063853) 
        (4,0.8061749532990262) 
        (5,0.8216989355158414) 
        (6,0.9232442652711763) 
        (7,0.7738824093564414) 
        (8,0.8572431300522775) 
        (9,0.8440685179919303) 
        (10,0.8178397191745264) 
        (11,0.8841863140629098) 
        (12,0.8827018555125183) 
        (13,0.8394129716850729) 
        (14,0.9497486053020924) 
        (15,0.9150827263714238) 
        (16,0.9439539622736448) 
        (17,0.9112789324837526) 
        (18,0.923290568917226) 
        (19,0.9585225747349444) 
        (20,0.9192400881966349) 
        (21,0.9286688248455159) 
        (22,0.9551540298628388) 
        (23,0.9283944081258823) 
        (24,0.9286784570392608) 
        (25,0.9513879674003535) 
        (26,0.9310301142930815) 
        (27,0.9053544129515202) 
        (28,0.979880453752842) 
        (29,0.921302778637263) 
        (30,0.9836953629007291) 
        (31,0.9525516117614996) 
        (32,0.965692102193558) 
        (33,0.9816749139386813) 
        (34,0.9472419569721469) 
        (35,0.9562868701025072) 
        (36,0.9832417345097058) 
        (37,0.9485298568014051) 
        (38,0.9640104642315458) 
        (39,0.9806616887383977) 
        (40,0.9587008622241554) 
        (41,0.9518216720054763) 
        (42,0.9798018292022863) 
        (43,0.9467988589786867) 
        (44,0.9502738192149646) 
        (45,0.9797495752597789) 
        (46,0.9447255620468197) 
        (47,0.9196502957045478) 
        (48,0.9907247725264815) 
        (49,0.9406137765960677) 
        (50,0.986668767496747) 
        (51,0.832671180575646) 
        (52,0.9671743637927724) 
        (53,0.9897727117154781) 
        (54,0.9623915767020542) 
        (55,0.9786813548801171) 
        (56,0.9953830800649236) 
        (57,0.9798539524609576) 
    };

    \addplot[blue,fill opacity=0.2,forget plot] fill between[of=upper and lower];

    \addplot[
        only marks,
        color=blue,
        mark=star,
    ]
    coordinates {
        (5,0.7676849365234375) 
        (6,0.822265625) 
        (16,0.8807735443115234) 
        (17,0.8818359375) 
        (30,0.9561572074890137) 
        (31,0.919677734375) 
        (50,0.9717733860015869) 
        (51,0.7845458984375) 
    };
    \addlegendentry{Downsample Weights}
    
    \addplot[
        only marks,
        color=red,
        mark=star,
          error bars/.cd,
          y dir=both,
          y explicit
    ]
    coordinates {
        (6,0.4638671875)  +- (0.10649009952285103,0.10649009952285103) 
        (17,0.6943359375)  +- (0.06786231114488941,0.06786231114488941) 
        (31,0.801025390625)  +- (0.06328236143289985,0.06328236143289985) 
        (51,0.8795166015625)  +- (0.025328045533683317,0.025328045533683317) 
    };

    \addlegendentry{Downsample Biases}
    
    
    
    \end{axis}
    \end{tikzpicture}
         \caption{Self-Supervised}
    \end{subfigure}%
    \begin{subfigure}{0.5\textwidth}
        \centering
        \begin{tikzpicture}
\begin{axis}[
    width=6cm,
    xlabel={Depth of Network},
    ylabel={Sparsity},
    ymin=-0.05, ymax=1,
    ytick={0.0, 0.2,0.4,0.6,0.8,1},
    legend pos=south east,
    ymajorgrids=true,
    grid style=dashed,
]

    \addplot[
        name path global=middle,
        color=blue,
    ]
    coordinates {
        (1,0.7688403278589249) 
        (2,0.6798095703125) 
        (3,0.769558385014534) 
        (4,0.711578369140625) 
        (5,0.7419281005859375) 
        (6,0.853515625) 
        (7,0.67864990234375) 
        (8,0.7436998337507248) 
        (9,0.735015869140625) 
        (10,0.7025909423828125) 
        (11,0.757080078125) 
        (12,0.76458740234375) 
        (13,0.7237701416015625) 
        (14,0.8254733681678772) 
        (15,0.7897300720214844) 
        (16,0.8396415710449219) 
        (17,0.853515625) 
        (18,0.839630126953125) 
        (19,0.880099818110466) 
        (20,0.7998161315917969) 
        (21,0.7944679260253906) 
        (22,0.8153991848230362) 
        (23,0.780517578125) 
        (24,0.7800979614257812) 
        (25,0.7883351594209671) 
        (26,0.7908401489257812) 
        (27,0.7571792602539062) 
        (28,0.8419736176729202) 
        (29,0.7568550109863281) 
        (30,0.8767924308776855) 
        (31,0.8681640625) 
        (32,0.8205585479736328) 
        (33,0.8427124172449112) 
        (34,0.7667655944824219) 
        (35,0.7946643829345703) 
        (36,0.8532558232545853) 
        (37,0.7738418579101562) 
        (38,0.7807865142822266) 
        (39,0.8110381215810776) 
        (40,0.784785270690918) 
        (41,0.7678442001342773) 
        (42,0.8319613039493561) 
        (43,0.7734136581420898) 
        (44,0.7649831771850586) 
        (45,0.813594400882721) 
        (46,0.7774887084960938) 
        (47,0.7529258728027344) 
        (48,0.9103271812200546) 
        (49,0.7599213123321533) 
        (50,0.9501379728317261) 
        (51,0.2813720703125) 
        (52,0.8811521530151367) 
        (53,0.9291502684354782) 
        (54,0.7709205150604248) 
        (55,0.8860363960266113) 
        (56,0.9382754564285278) 
        (57,0.7697410583496094) 
    };

    \addplot[name path global=lower,draw=none] coordinates {
        (1,0.7256475809659295) 
        (2,0.6225125430909892) 
        (3,0.7062080912122571) 
        (4,0.6579872687941944) 
        (5,0.6967541186166969) 
        (6,0.8091921651644132) 
        (7,0.6209414606692141) 
        (8,0.6818376833784636) 
        (9,0.6876283393088414) 
        (10,0.6489758327670822) 
        (11,0.6958240909924399) 
        (12,0.7181056744664055) 
        (13,0.6717575402035689) 
        (14,0.7645204040850058) 
        (15,0.7410077473858568) 
        (16,0.7816543666934215) 
        (17,0.823415947251128) 
        (18,0.7850187535593615) 
        (19,0.8277788775986608) 
        (20,0.7422083256416705) 
        (21,0.7330908763049043) 
        (22,0.7492222995765832) 
        (23,0.7157878660195374) 
        (24,0.721478212184168) 
        (25,0.7170274713215499) 
        (26,0.7313280766483224) 
        (27,0.6959981681748616) 
        (28,0.766441858035525) 
        (29,0.6851298298642704) 
        (30,0.8096991282325473) 
        (31,0.8076999368390483) 
        (32,0.7461521575804176) 
        (33,0.7618478367615528) 
        (34,0.6779092251061054) 
        (35,0.7107444712948913) 
        (36,0.7711463123398197) 
        (37,0.6801106706122746) 
        (38,0.6903746087948184) 
        (39,0.7145492164635697) 
        (40,0.6914115796151332) 
        (41,0.6743906693176511) 
        (42,0.7377004423260505) 
        (43,0.6770365194618081) 
        (44,0.6726843669083415) 
        (45,0.712568274454105) 
        (46,0.6884503495936005) 
        (47,0.6561470258503791) 
        (48,0.8409920578398866) 
        (49,0.6473631240825188) 
        (50,0.9270950650506864) 
        (51,0.23173116049219372) 
        (52,0.8164424040677453) 
        (53,0.8779387376364861) 
        (54,0.6433082063595339) 
        (55,0.826816803889687) 
        (56,0.894258076433624) 
        (57,0.6393174284566927) 
    };

    \addplot[name path global=upper,draw=none] coordinates {
        (1,0.8120330747519202) 
        (2,0.7371065975340108) 
        (3,0.8329086788168109) 
        (4,0.7651694694870556) 
        (5,0.7871020825551781) 
        (6,0.8978390848355868) 
        (7,0.7363583440182859) 
        (8,0.805561984122986) 
        (9,0.7824033989724086) 
        (10,0.7562060519985428) 
        (11,0.8183360652575601) 
        (12,0.8110691302210945) 
        (13,0.7757827429995561) 
        (14,0.8864263322507486) 
        (15,0.8384523966571119) 
        (16,0.8976287753964223) 
        (17,0.883615302748872) 
        (18,0.8942415003468885) 
        (19,0.9324207586222713) 
        (20,0.8574239375419233) 
        (21,0.8558449757458769) 
        (22,0.8815760700694892) 
        (23,0.8452472902304626) 
        (24,0.8387177106673945) 
        (25,0.8596428475203843) 
        (26,0.85035222120324) 
        (27,0.8183603523329509) 
        (28,0.9175053773103155) 
        (29,0.8285801921083858) 
        (30,0.9438857335228238) 
        (31,0.9286281881609517) 
        (32,0.894964938366848) 
        (33,0.9235769977282696) 
        (34,0.8556219638587383) 
        (35,0.8785842945742494) 
        (36,0.9353653341693509) 
        (37,0.8675730452080379) 
        (38,0.8711984197696347) 
        (39,0.9075270266985854) 
        (40,0.8781589617667027) 
        (41,0.8612977309509036) 
        (42,0.9262221655726617) 
        (43,0.8697907968223716) 
        (44,0.8572819874617756) 
        (45,0.9146205273113369) 
        (46,0.866527067398587) 
        (47,0.8497047197550897) 
        (48,0.9796623046002226) 
        (49,0.8724795005817878) 
        (50,0.9731808806127658) 
        (51,0.3310129801328063) 
        (52,0.9458619019625282) 
        (53,0.9803617992344703) 
        (54,0.8985328237613157) 
        (55,0.9452559881635356) 
        (56,0.9822928364234317) 
        (57,0.900164688242526) 
    };

    \addplot[blue,fill opacity=0.2] fill between[of=upper and lower];

    \addplot[
        only marks,
        color=blue,
        mark=star,
    ]
    coordinates {
        (5,0.7419281005859375) 
        (6,0.853515625) 
        (16,0.8396415710449219) 
        (17,0.853515625) 
        (30,0.8767924308776855) 
        (31,0.8681640625) 
        (50,0.9501379728317261) 
        (51,0.2813720703125) 
    };

    \addplot[
        only marks,
        color=red,
        mark=star,
          error bars/.cd,
          y dir=both,
          y explicit
    ]
    coordinates {
        (6,0.3594324697208475)  +- (0.06634878027915246,0.06634878027915246) 
        (17,0.6402549586556081)  +- (0.02917863509439192,0.02917863509439192) 
        (31,0.532773921619291)  +- (0.06756787525570894,0.06756787525570894) 
        (51,0.019485770320276347)  +- (0.05375641717972365,0.05375641717972365) 
    };



    \end{axis}
    \end{tikzpicture}
         \caption{Supervised}
    \end{subfigure}
    \caption{Sparsity levels found across layers by our Self-Masking Networks on a SwAV-pretrained
ResNet-50, compared to our supervised masking algorithm. Average across the datasets CIFAR-100, CIFAR-10, SUN397, and DTD. Standard deviations included.}
    \label{fig:daaa}
\end{figure*}

\section{Progressive Sparsity}

\begin{table*}
    \caption{\textbf{Updating Ramanujan~\etal~'s method \cite{ramanujanWhatHiddenRandomly2020}, such that it decreases the number of active weights as training progresses. } The progression is a linear progression with respect to the percentage of active weights, starting at 100\% and going down to the 'found' sparsity.
      All numbers are reported by masking with a self-supervised loss, starting from a ResNet-50 pretrained with SwAV.
      \\}
    \label{tab:progressive}
    
\centering
\small
  \setlength{\tabcolsep}{3pt}
     \begin{tabular}{@{}l c cccccccc}
     \toprule
     Masking mechanism & Sparsity & \textsc{dtd} & \textsc{eurosat} & \textsc{flowers} & \textsc{ucf101} \\
     \midrule
\multicolumn{4}{l}{\textit{$k$-NN evaluation}} \\
        Ramanujan~\etal~\cite{ramanujanWhatHiddenRandomly2020} & progressive &	0.658 &	0.957 &	0.895 & 0.549 \\
        Ramanujan~\etal~\cite{ramanujanWhatHiddenRandomly2020} & found  &	0.671 &	\textbf{0.973} &	0.903	 &	 \textbf{0.572} \\
        Threshold (Ours)  & found  &	\textbf{0.674} & 0.971 &	\textbf{0.920}	& 0.549 \\

     \midrule
\multicolumn{4}{l}{\textit{Linear probe evaluation}} \\
          Ramanujan~\etal~\cite{ramanujanWhatHiddenRandomly2020} & progressive &	0.706 &	0.981 & 0.983 & 0.681 \\
         Ramanujan~\etal~\cite{ramanujanWhatHiddenRandomly2020} & found & \textbf{0.733} & \textbf{0.984} & \textbf{0.985} & 0.690 \\
        Threshold (Ours)& found & 0.714 & 0.983 & 0.983 &\textbf{0.697} \\
        \midrule 
        \midrule
                \multicolumn{2}{l}{Sparsity level found by our method:} & 98.8\% & 96.7\% & 98.6\% & 95.7\% \\
    \bottomrule
    \end{tabular}
\end{table*} 
Table \ref{tab:progressive} shows an additional experiment, where we progressively deactivate more weights during training, using a fixed schedule, allowing Ramanujan~\etal's method to start with every weight activated. Unfortunately, this only seems to degrade its performance.

\pagebreak
\section{Selective masking}

This section evaluates whether the storage cost can be further reduced by selecting a smaller subset of the weights to be masked, and leaving the others unchanged. Following the results from Appendix \ref{apx:comp}, where earlier layers where masked more than subsequent layers, we see what performance we get when only masking the first layer (L1), the first and second (L1 + L2) and the first, second and third (L1 + L2 + L3) layers. The results are compared to the baseline of masking all layers (L1+L2+L3+L4). In addition, we perform separate experiments where we randomly freeze each individual mask, corresponding to a single weight, with probability $p$, such that we run 9 additional experiments where 10\%, 20\%, ... up to 90\% of masks are trained. The results are shown in Figure \ref{fig:sm}. In addition, two experiments are shown where only 0.9\% and 6.1\% of weights are masked, corresponding to the number of parameters in only the first layer or only the first and second layers.

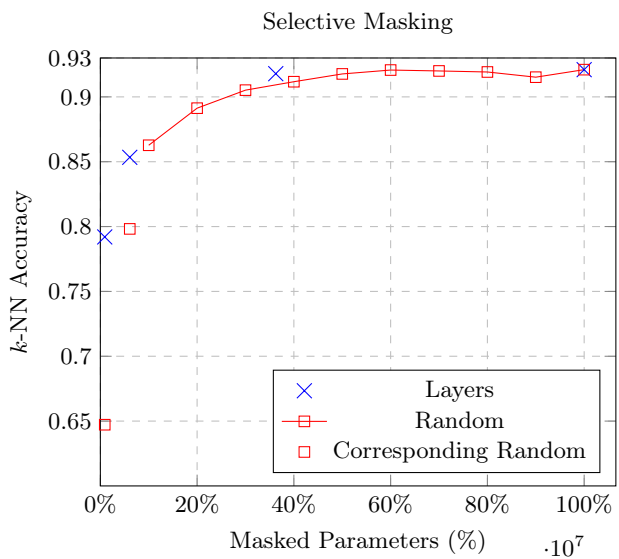
\begin{figure}
\centering
    \centering
    
    \begin{tikzpicture}
    \begin{axis}[
        title={Selective Masking},
        xlabel={Masked Parameters (\%)},
        ylabel={$k$-NN Accuracy},
        xmin=0, xmax=25000000,
        ymin=0.60, ymax=0.93,
        xtick={0,4692518.4,9385036.8,14077555.2,18770073.6,23462592},
        xticklabels={0\%,20\%,40\%,60\%,80\%,100\%},
        ytick={0.65,0.70,0.75,0.8,0.85,0.9,0.93},
        legend pos=south east,
        ymajorgrids=true,
        xmajorgrids=true,
        grid style=dashed,
    ]

    \addplot[
        color=blue,
        mark=x,
        mark size=4pt,
        only marks
        ]
        coordinates {
        (213504,0.792)(1426944,0.8534)(8506880,0.9179)(23462592,0.921)
        };
        \addlegendentry{Layers}

    \addplot[
        color=red,
        mark=square,
        ]
        coordinates {
        (23462592,0.921)(21116332.8,0.9152)(18770073.6,0.9192)(16423814.4,0.92)(14077555.2,0.9207)(11731296.0,0.9177)(9385036.8,0.9117)(7038777.6,0.9052)(4692518.4,0.8913)(2346259.2,0.8627)
        };
        \addlegendentry{Random}
    
    \addplot[
        color=red,
        mark=square,
        mark size=2pt,
        only marks
        ]
        coordinates {
        (213504,0.6472)(1426944,0.7982)(8506880,0.567)
        };
        \addlegendentry{Corresponding Random}
    
    \end{axis}
    \end{tikzpicture}
        \caption{Accuracy when only masking the first N ResNet-50 layers (L1, L2, L3, L4) with corresponding experiments where random weights are masked instead, and accuracy when training 10\%, 20\%, ... up to 90\% of masks randomly across all weights in each weight matrix. The resulting total number of trainable masks is given on the x-axis. These experiments where run with a ResNet-50 from SWaV.}
        \label{fig:sm}
\end{figure}

It can be seen that, for this dataset, the model is able to maintain its accuracy with little loss in accuracy when training a mask on only half of the parameters. The graph also shows that, on this experimental setting, masking the first $N$ layers yields higher accuracies than randomly masking the same number of weights across the whole network. It is thus possible to further reduce storage costs of the masks with little loss of accuracy by at least 50\% using this technique. Corresponding to an up to 64x reduction of storage costs of the fine-tuned model when compared to full fine-tuning.

Additionally, we run the same experiment on Cifar-100, but here we compare the random selection of trainable masks across layers with a similar method where we select the top-k\% masks corresponding to weights with the highest magnitude. The results are displayed in Figure \ref{fig:c100maxmag}, where it can be seen that the maximum magnitude approach provides some benefit when only a tiny proportion of weights are trained, however, by at this level, the accuracy is already far below the baseline when masking all weights. Surprisingly though, a small gain in accuracy appears to be made when only masking around half the weights as opposed to all.

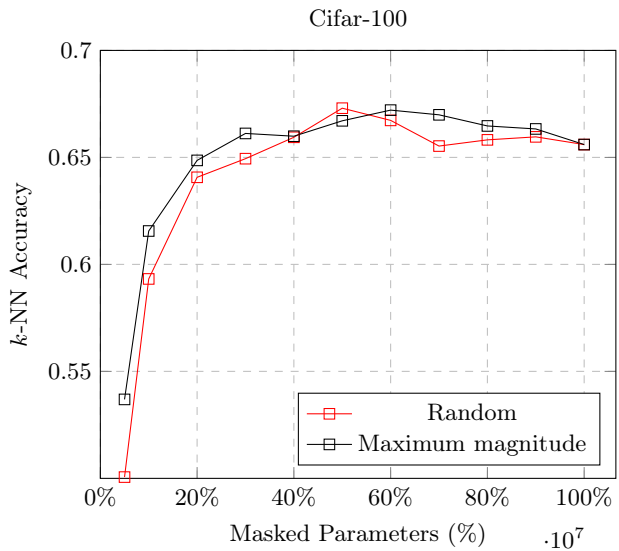
\begin{figure}
        \centering
    
    \begin{tikzpicture}
    \begin{axis}[
        title={Cifar-100},
        xlabel={Masked Parameters (\%)},
        ylabel={$k$-NN Accuracy},
        xmin=0, xmax=25000000,
        ymin=0.50, ymax=0.70,
        xtick={0,4692518.4,9385036.8,14077555.2,18770073.6,23462592},
        xticklabels={0\%,20\%,40\%,60\%,80\%,100\%},
        ytick={0.55,0.60,0.65,0.70},
        legend pos=south east,
        ymajorgrids=true,
        xmajorgrids=true,
        grid style=dashed,
    ]
     

    \addplot[
        color=red,
        mark=square,
        ]
        coordinates {
        (23462592,0.656)(21116332.8,0.6596)(18770073.6,0.6582)(16423814.4,0.6553)(14077555.2,0.6673)(11731296.0,0.673)(9385036.8,0.6594)(7038777.6,0.6494)(4692518.4,0.6407)(2346259.2,0.5932)(1173129.5,0.5006)
        };
        \addlegendentry{Random}

    \addplot[
        color=black,
        mark=square,
        ]
        coordinates {
        (23462592,0.656)(21116332.8,0.6633)(18770073.6,0.6647)(16423814.4,0.6699)(14077555.2,0.6721)(11731296.0,0.6671)(9385036.8,0.6599)(7038777.6,0.6612)(4692518.4,0.6486)(2346260.2,0.6156)
        (1173129.5,0.5369)
        };
        \addlegendentry{Maximum magnitude}
    
    
    \end{axis}
    \end{tikzpicture}
    \caption{Accuracy when training 5\%, 10\%, 20\%, ... up to 90\% of masks randomly across all weights in each weight matrix, compared to training 5\%, 10\%, 20\%, ... up to 90\% of the masks corresponding to the weights with the highest magnitudes.}
    \label{fig:c100maxmag}
\end{figure}%

\section{Dispatcher Clusters}

\begin{figure*}
     \centering
     \begin{subfigure}[b]{0.25\textwidth}
        \includegraphics[width=\columnwidth]{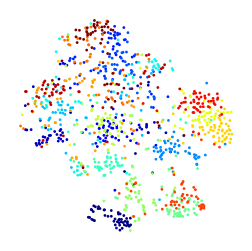}
        \caption{\textbf{Original embeddings.}}
    \end{subfigure}
    \hfill
    \begin{subfigure}[b]{0.25\textwidth}
        \includegraphics[width=\columnwidth]{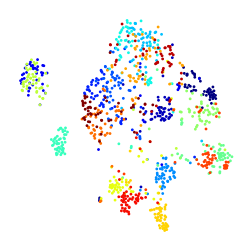}
        \caption{\textbf{Adapted embeddings.}}
    \end{subfigure}
    \hfill
     \begin{subfigure}[b]{0.25\textwidth}
        \includegraphics[width=\columnwidth]{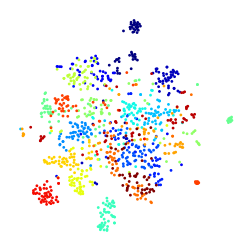}
        \caption{\textbf{Cascade embeddings.}}
    \end{subfigure}
    \caption{We compare the embeddings from (a) the original pretrained model, (b) the adapted model and (c) the cascade model on first 20 classes of \textsc{CIFAR100}. Ground-truth classes coded by color. \label{fig:tsne}}
\end{figure*}
In~\Cref{fig:tsne} we compare the final embeddings of the original pretrained model, the domain adapted and the cascade model on a subset of \textsc{CIFAR100}. 
We find that our self-mask adapted embeddings create a few more distinct clusters, but a massive difference is not visible when compared to the cascade embeddings. Both however are a clear improvement over the unadapted embeddings from the pre-trained model.

We also show in Figure \ref{fig:5l}, for each cluster of the Cascade model, the cumulative distribution of datapoints belonging to the most common classes in each cluster. This plot shows that the dispatcher is effective at creating dataset splits that have more homogenous distributions. For example, it can be read from the graph that, for each cluster, the top 20 classes represent at least 60\% of the datapoints in that cluster, and three of these clusters are even more homogenous, with the top 20 classes representing at least 80\% of the datapoints.

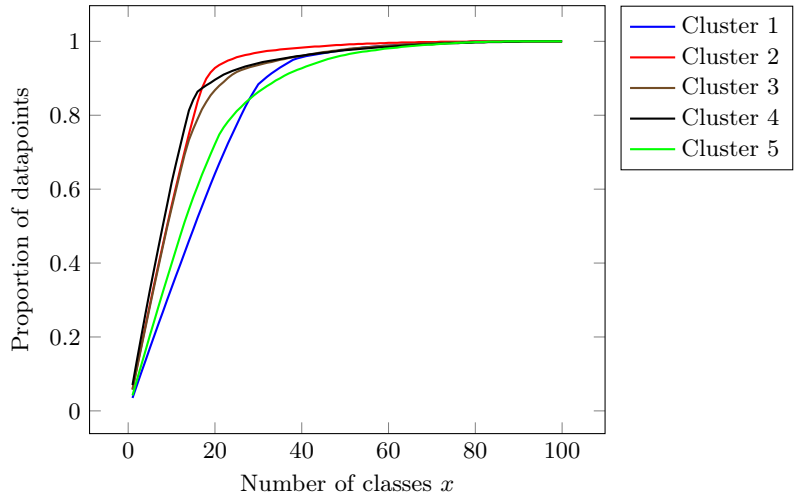
\begin{figure*}
    \centering
    \begin{tikzpicture}
    \begin{axis}[xlabel=Number of classes $x$, ylabel=Proportion of datapoints, legend pos=outer north east, no markers]
    
    \addplot+[thick] coordinates {
    (1,0.035) (2,0.069) (3,0.103) (4,0.136) (5,0.170) (6,0.203) (7,0.236) (8,0.268) (9,0.301) (10,0.333) (11,0.365) (12,0.396) (13,0.428) (14,0.460) (15,0.491) (16,0.523) (17,0.553) (18,0.583) (19,0.613) (20,0.642) (21,0.670) (22,0.697) (23,0.724) (24,0.749) (25,0.774) (26,0.799) (27,0.823) (28,0.845) (29,0.865) (30,0.884) (31,0.894) (32,0.904) (33,0.913) (34,0.921) (35,0.929) (36,0.936) (37,0.943) (38,0.950) (39,0.954) (40,0.957) (41,0.960) (42,0.962) (43,0.964) (44,0.966) (45,0.968) (46,0.970) (47,0.972) (48,0.974) (49,0.975) (50,0.977) (51,0.978) (52,0.980) (53,0.981) (54,0.983) (55,0.984) (56,0.985) (57,0.986) (58,0.987) (59,0.988) (60,0.988) (61,0.989) (62,0.990) (63,0.991) (64,0.991) (65,0.992) (66,0.993) (67,0.993) (68,0.994) (69,0.994) (70,0.995) (71,0.995) (72,0.996) (73,0.996) (74,0.997) (75,0.997) (76,0.997) (77,0.998) (78,0.998) (79,0.999) (80,0.999) (81,0.999) (82,0.999) (83,0.999) (84,1.000) (85,1.000) (86,1.000) (87,1.000) (88,1.000) (89,1.000) (90,1.000) (91,1.000) (92,1.000) (93,1.000) (94,1.000) (95,1.000) (96,1.000) (97,1.000) (98,1.000) (99,1.000) (100,1.000)
    };
    
    \addlegendentry{Cluster 1}
    
    \addplot+[thick] coordinates {
    (1,0.059) (2,0.117) (3,0.174) (4,0.231) (5,0.287) (6,0.343) (7,0.397) (8,0.450) (9,0.503) (10,0.555) (11,0.605) (12,0.654) (13,0.702) (14,0.749) (15,0.793) (16,0.837) (17,0.873) (18,0.899) (19,0.916) (20,0.928) (21,0.936) (22,0.942) (23,0.948) (24,0.952) (25,0.957) (26,0.960) (27,0.963) (28,0.965) (29,0.968) (30,0.970) (31,0.972) (32,0.974) (33,0.975) (34,0.976) (35,0.978) (36,0.979) (37,0.980) (38,0.981) (39,0.982) (40,0.983) (41,0.984) (42,0.985) (43,0.986) (44,0.986) (45,0.987) (46,0.988) (47,0.989) (48,0.990) (49,0.990) (50,0.991) (51,0.992) (52,0.992) (53,0.993) (54,0.993) (55,0.994) (56,0.994) (57,0.994) (58,0.995) (59,0.995) (60,0.996) (61,0.996) (62,0.996) (63,0.997) (64,0.997) (65,0.997) (66,0.997) (67,0.998) (68,0.998) (69,0.998) (70,0.998) (71,0.998) (72,0.999) (73,0.999) (74,0.999) (75,0.999) (76,0.999) (77,0.999) (78,0.999) (79,1.000) (80,1.000) (81,1.000) (82,1.000) (83,1.000) (84,1.000) (85,1.000) (86,1.000) (87,1.000) (88,1.000) (89,1.000) (90,1.000) (91,1.000) (92,1.000) (93,1.000) (94,1.000) (95,1.000) (96,1.000) (97,1.000) (98,1.000) (99,1.000) (100,1.000)
    };
    
    \addlegendentry{Cluster 2}

    \addplot+[thick] coordinates {
    (1,0.057) (2,0.114) (3,0.170) (4,0.226) (5,0.282) (6,0.337) (7,0.391) (8,0.445) (9,0.498) (10,0.548) (11,0.597) (12,0.646) (13,0.693) (14,0.733) (15,0.763) (16,0.790) (17,0.816) (18,0.836) (19,0.854) (20,0.868) (21,0.880) (22,0.890) (23,0.901) (24,0.910) (25,0.917) (26,0.922) (27,0.926) (28,0.930) (29,0.933) (30,0.936) (31,0.939) (32,0.942) (33,0.944) (34,0.947) (35,0.950) (36,0.952) (37,0.955) (38,0.957) (39,0.959) (40,0.961) (41,0.963) (42,0.965) (43,0.967) (44,0.969) (45,0.971) (46,0.972) (47,0.974) (48,0.975) (49,0.977) (50,0.978) (51,0.980) (52,0.981) (53,0.982) (54,0.983) (55,0.984) (56,0.985) (57,0.986) (58,0.987) (59,0.988) (60,0.989) (61,0.989) (62,0.990) (63,0.991) (64,0.992) (65,0.992) (66,0.993) (67,0.993) (68,0.994) (69,0.994) (70,0.995) (71,0.995) (72,0.996) (73,0.996) (74,0.997) (75,0.997) (76,0.997) (77,0.998) (78,0.998) (79,0.998) (80,0.999) (81,0.999) (82,0.999) (83,0.999) (84,0.999) (85,0.999) (86,0.999) (87,1.000) (88,1.000) (89,1.000) (90,1.000) (91,1.000) (92,1.000) (93,1.000) (94,1.000) (95,1.000) (96,1.000) (97,1.000) (98,1.000) (99,1.000) (100,1.000)
    };
    
    \addlegendentry{Cluster 3}

    \addplot+[thick] coordinates {
    (1,0.070) (2,0.136) (3,0.200) (4,0.264) (5,0.326) (6,0.385) (7,0.444) (8,0.502) (9,0.559) (10,0.617) (11,0.668) (12,0.717) (13,0.766) (14,0.813) (15,0.842) (16,0.864) (17,0.873) (18,0.881) (19,0.888) (20,0.896) (21,0.903) (22,0.910) (23,0.915) (24,0.920) (25,0.924) (26,0.928) (27,0.932) (28,0.935) (29,0.938) (30,0.941) (31,0.944) (32,0.946) (33,0.948) (34,0.950) (35,0.952) (36,0.954) (37,0.956) (38,0.958) (39,0.960) (40,0.961) (41,0.963) (42,0.965) (43,0.966) (44,0.968) (45,0.969) (46,0.970) (47,0.972) (48,0.973) (49,0.975) (50,0.976) (51,0.977) (52,0.978) (53,0.979) (54,0.980) (55,0.981) (56,0.982) (57,0.983) (58,0.984) (59,0.984) (60,0.985) (61,0.986) (62,0.987) (63,0.988) (64,0.989) (65,0.989) (66,0.990) (67,0.991) (68,0.991) (69,0.992) (70,0.992) (71,0.993) (72,0.994) (73,0.994) (74,0.995) (75,0.995) (76,0.995) (77,0.996) (78,0.996) (79,0.996) (80,0.997) (81,0.997) (82,0.997) (83,0.998) (84,0.998) (85,0.998) (86,0.998) (87,0.999) (88,0.999) (89,0.999) (90,0.999) (91,1.000) (92,1.000) (93,1.000) (94,1.000) (95,1.000) (96,1.000) (97,1.000) (98,1.000) (99,1.000) (100,1.000)
    };
    
    \addlegendentry{Cluster 4}

    \addplot+[thick, color=green] coordinates {
    (1,0.042) (2,0.083) (3,0.123) (4,0.163) (5,0.204) (6,0.243) (7,0.283) (8,0.322) (9,0.360) (10,0.398) (11,0.436) (12,0.474) (13,0.511) (14,0.545) (15,0.578) (16,0.608) (17,0.639) (18,0.667) (19,0.695) (20,0.722) (21,0.748) (22,0.767) (23,0.783) (24,0.797) (25,0.811) (26,0.822) (27,0.834) (28,0.844) (29,0.854) (30,0.863) (31,0.871) (32,0.879) (33,0.886) (34,0.894) (35,0.900) (36,0.907) (37,0.913) (38,0.918) (39,0.923) (40,0.927) (41,0.932) (42,0.936) (43,0.940) (44,0.944) (45,0.948) (46,0.952) (47,0.955) (48,0.958) (49,0.961) (50,0.963) (51,0.966) (52,0.968) (53,0.970) (54,0.972) (55,0.973) (56,0.975) (57,0.976) (58,0.978) (59,0.980) (60,0.981) (61,0.982) (62,0.983) (63,0.985) (64,0.986) (65,0.987) (66,0.988) (67,0.989) (68,0.990) (69,0.991) (70,0.991) (71,0.992) (72,0.993) (73,0.994) (74,0.994) (75,0.995) (76,0.996) (77,0.996) (78,0.997) (79,0.997) (80,0.998) (81,0.998) (82,0.998) (83,0.999) (84,0.999) (85,0.999) (86,0.999) (87,1.000) (88,1.000) (89,1.000) (90,1.000) (91,1.000) (92,1.000) (93,1.000) (94,1.000) (95,1.000) (96,1.000) (97,1.000) (98,1.000) (99,1.000) (100,1.000)
    };
    
    \addlegendentry{Cluster 5}

    
    \end{axis}
    \end{tikzpicture}

    \caption{Proportion of datapoints made up of the top $x$ most common classes for each of the five cascade clusters (training set).}
    \label{fig:5l}
\end{figure*}

\section{Error Bars}

Due to computational and time constraints, we were only able to run every experiment once. However, to give an idea of the level of noise in our results, we show the variance of our thresholding method by running it 5x on two different datasets. The results can be seen in Table \ref{tab:variance}.

\begin{table*}
    \caption{\textbf{Variance in $k$-NN accuracies with the Threshold Masking Mechanism.} This table shows the results of five separate runs of the same experiment on the "dtd" and "ucf101" datasets, including the mean across experiments and the standard deviation of the results.}
    \label{tab:variance}
    
\centering
\small
  \setlength{\tabcolsep}{3pt}
     \begin{tabular}{@{}l c cccccccc}
     \toprule
     Dataset & Run & R1 & R2 & R3 & R4 & R5 & Mean & STD \\
     \midrule
\multicolumn{2}{l}{\textit{$k$-NN evaluation}} \\
        \textsc{dtd}  & Threshold (Ours) & 0.685 & 0.678 & 0.670 & 0.678 & 0.671 & 0.676 & 0.005 \\
        \textsc{ucf101} & Threshold (Ours) & 0.569 &0.576 & 0.557 & 0.574 & 0.568 & 0.569 & 0.007 \\
    \bottomrule
    \end{tabular}
\end{table*}

\section{Datasets}
\label{app:ds}
 We use datasets from \cite{Loedeman2022prompt}, namely \textsc{CIFAR100} \& \textsc{CIFAR10} \cite{krizhevsky2009learning}, \textsc{Oxford Flowers}~\cite{nilsback2008automated}, \textsc{Food101}~\cite{bossard2014food}, \textsc{EuroSAT}~\cite{helber2019eurosat}, \textsc{SUN397}~\cite{xiao2010sun}, \textsc{UCF101}~\cite{soomro2012ucf101}, \textsc{Oxford-IIIT Pets}~\cite{parkhi2012cats}, \textsc{DTD}~\cite{cimpoi2014sammy}. For the Model Cascade, we also use a fine-grained subset of iNaturalist~\cite{van2018inaturalist}, \textsc{iNatLoc500} from \cite{cole2022label}.

\section{Compute cost}

 At least 24GB of GPU memory is necessary to run the most demanding experiments. On an A100 gpu, most experiments (training a single model) take less than 24h. Self-supervised fine-tuning is usually slower than supervised fine-tuning. Masking one of the smaller datasets with the supervised algorithm only takes about 6 hours. Masking was not noticeably slower than full fine-tuning. So the costs of training one model/mask will vary between \$12 and \$48 at \$2 per GPU-hour. This applies to the ResNet50 and ViT-B models. ResNet18 is a lot cheaper/faster.

\section{Broader Impact}

Our technique could help with reducing storage and network tansfer costs. However, it is not more computationally efficient and thus the (more important) energy cost of running these masked models remains equal, or increases if running the Cascade model. Another interesting aspect of mask-learning in general through the pass-through trick is that it enables the learning of \textit{discrete} components like text rather than only being able to learn images. Hence, the pass-through trick could be used to perform text reconstruction attacks. Existing approaches to privacy-preserving model training thus should also protect against this.

\section{Source Code}

Source code is available at \url{https://github.com/alvitawa/UnsupervisedMasking}. 

\end{document}